\documentclass[11pt]{article}
\pdfoutput=1  % arXiv: compile with pdflatex (all figures are PDF)

\usepackage[utf8]{inputenc}
\usepackage[T1]{fontenc}
\usepackage{lmodern}
\usepackage{microtype}
\usepackage{amsmath,amssymb,amsthm}
\usepackage{booktabs}
\usepackage{graphicx}
\usepackage{algorithm}
\usepackage{algpseudocode}
\usepackage[round]{natbib}
\usepackage[margin=1.05in]{geometry}
\usepackage[colorlinks=true,linkcolor=blue,citecolor=blue,urlcolor=blue]{hyperref}
\usepackage{url}
\usepackage{tikz}
\usetikzlibrary{positioning,arrows.meta,fit,calc}

\newcommand{\score}{\mathrm{GEO}}
\newcommand{\pawc}{\mathrm{PAWC}}
\newtheorem{definition}{Definition}
\newtheorem{assumption}{Assumption}
\newtheorem{problem}{Problem}
\newtheorem{proposition}{Proposition}
\newtheorem{lemma}{Lemma}
\newtheorem{remark}{Remark}

\tikzset{
  papbox/.style={draw, rounded corners=1pt, align=center, font=\footnotesize,
                 inner sep=4pt, minimum height=7mm},
  paparrow/.style={-{Latex[length=2mm]}, thick},
  paplabel/.style={font=\scriptsize\itshape, align=center},
}

\title{Scoring Without the Engine:\\
Validating a Deterministic, Manipulation-Resistant Content Score\\
for Generative Engines, End to End\thanks{All reported numbers reproduce
offline from released artifacts; see Section~\ref{sec:repro}.}}

\author{Elisha Bajemon\\
TW3 Partners (Citead)\\
\texttt{e.bajemon@tw3partners.com}
\and
André-Louis Rochet\\
TW3 Partners (Citead)\\
\texttt{arochet@tw3partners.com}}

\date{July 2026}

\begin{document}
\maketitle

\begin{abstract}
How do you validate a cheap, deterministic proxy for an oracle that is
expensive, rate-limited, and non-stationary? We present a protocol
whose core is a set of \emph{adversarial falsification gates} that
define and select the proxy (negative control, dose response, bounded
amplification, duplication penalty, length neutrality), selected on a
training split and confirmed held-out; around the gates, the protocol
bounds what the proxy can never resolve with a query-conditioned
skyline model, checks the aggregation against external causal evidence
where any exists, and re-measures that evidence on the current oracle
rather than assuming it. We demonstrate the protocol end to end on
Generative Engine Optimization, where the proxy is a deterministic,
auditable content score, and one step fails on the demonstration
domain in exactly the way the protocol is built to detect: re-measuring
the only published causal anchors (2023 effect sizes) on ten modern
engine families (six open-weights, four commercial, including three
proprietary gpt-5.x arms) shows their levers move citation on none of
them, so the anchors are an expired external check, the released
modern anchor vector is near zero, and recalibrating to it strips the
score of its lever-responsive components: what survives, and what the
paper evaluates, is the gate-enforced response surface. The gates buy
a measured property: on a 500-source benchmark of volume-controlled
adversarial edits, amplifying the score's own calibrated levers gains
an attacker at most $6$ points, and this gain decreases with dose;
single-lever dose amplification is provably bounded
(Lemma~\ref{lem:concave}), while the absolute $6$-point cap and the
sub-additivity across levers are empirical findings consistent with
that bound rather than corollaries of it; on detection, standard
web-spam baselines dominate and
out-of-distribution attacks evade the score, so the deployable filter
layers the score over those baselines. A query-conditioned skyline
bounds the score's citation signal (within-query Spearman $0.11$; no
content-only model from our feature family beats it, consistent with,
though not an estimate of, the formal ceiling), repositioning every
query-agnostic score as a quality filter rather than a citation
predictor. The audit trail is part of the method: a query-leakage bug
in our own first ranking evaluation and a failed confidence flag are
disclosed and corrected, and every number reproduces offline from
released artifacts at zero marginal API cost.
\end{abstract}

\section{Introduction}
\label{sec:intro}

Generative engines (AI overviews, conversational search, answer engines)
increasingly mediate access to web content, and a literature has emerged on
\emph{Generative Engine Optimization} (GEO): which properties of a page cause
it to be cited in generated answers \citep{aggarwal2024geo, pinterest2026geo}.
The standard way to evaluate GEO is \emph{answer-side}: query a live engine,
parse its answer, and attribute visibility to sources
\citep{aggarwal2024geo, mageo2026, agenticgeo2026}. Answer-side measurement is
causally the gold standard, but it is poorly suited to benchmarking: it is
paid, rate-limited, non-stationary, and hard to reproduce, and its cost grows
with every site evaluated.

This paper contributes a validation protocol for exactly this
situation, a cheap deterministic proxy standing in for an expensive,
non-stationary generative oracle, and demonstrates it end to end on
GEO: define and select by adversarial falsification gates; bound with a
conditioned skyline; check against external causal evidence where any
exists; and re-measure that evidence on the current oracle. On the
demonstration domain the causal check comes back expired, a result the
protocol is designed to produce rather than a defect it hides. The
protocol separates two
objects that are often conflated.
The first, the protocol's demonstration artifact, is a
\textbf{query-agnostic
content score evaluated as a manipulation-resistant quality filter}
(Sections~\ref{sec:algo}
to~\ref{sec:gaming}): a deterministic score computed from page text
alone, suitable for corpus filtering, editorial quality control, and
public leaderboards, auditable line by line, and costing nothing to
recompute. Such a score cannot see the query, so it should
not be expected to resolve which source an engine will cite for a specific
question; its claims must be established differently, by the
falsification gates and by alignment with historical causal evidence
(the only published effect sizes, which Section~\ref{sec:outcome} shows
are now expired on current engines), and
Section~\ref{sec:gaming} evaluates the latter directly on a
gaming-detection benchmark. The second object
is a \textbf{query-conditioned citation predictor}
(Section~\ref{sec:ltr}): not a competing product but a measuring instrument,
built to quantify the ceiling that query-blindness imposes on the first
object and how much of the remaining signal query--source relevance
recovers. The two artifacts share the same deterministic feature stack but
answer different questions, and we evaluate each against the standard
appropriate to it (Figure~\ref{fig:twoobjects}).

Four results organize the paper. First, the gates buy a property no
detection baseline provides: an attacker amplifying the score's
calibrated levers is capped at $6$ points, with gains that shrink as
the attack intensifies and do not compound across levers; the hardened
benchmark also establishes the division of labor, spam-style
degradation detection belongs to standard web-spam baselines, bounded
inflatability belongs to the score. Second, the protocol's re-measurement step returns a negative that
retires its own first step: under paired, volume-controlled edits on
ten engine families, none of the three strongest 2023 GEO levers moves
citation, and the self-measured modern anchor vector we release is
statistically indistinguishable from zero where it is not nominally
negative, the first anchor measurement on current engines; the causal
anchoring is thereby demoted to an expired external check, and the
gates carry the definition of the score. Third, the formal bound and its empirical counterpart are kept
distinct: Proposition~\ref{prop:ceiling} shows some ceiling
$C^{\max}$ binds every query-blind scorer, and, \emph{relative to our
content-feature family}, a flexible ranker fails to beat the fixed
score on unseen queries ($0.10$ versus $0.12$), evidence consistent
with the bound though not an estimate of $C^{\max}$ itself.
% TO-PRODUCE (authors): estimate C_max directly from the released
% five-source outcome records (e.g., the best achievable within-query
% ordering given only source identities), so the empirical gap to the
% formal ceiling is quantified rather than argued.
Fourth,
query conditioning roughly triples the measured signal, and the
protocol's audit discipline is what makes that number trustworthy: our
own first evaluation overstated it through query leakage across engine
arms, which a query-blind probe caught and query-disjoint folds
corrected; the strongest single conditioned signal is an open-weights
cross-encoder, and a regex-only ranker recovers under two fifths of the
measured gap.

Beyond GEO, the paper is a case study in validating a cheap
deterministic proxy for an expensive, non-stationary generative oracle:
define by adversarial constraints, check against the strongest causal
evidence available, report per-parameter identifiability rather than
optimality, bound the proxy with a conditioned skyline, and re-measure
the anchors rather than assuming them. The same problem arises from
LLM-judge replacement \citep{zheng2023judge} to corpus filtering
\citep{wenzek2020ccnet}; GEO supplies published causal anchors and a
measurable engine outcome, the rare ingredients that make the protocol
demonstrable end to end.

\begin{figure}[t]
\centering
\begin{tikzpicture}[node distance=4mm and 8mm]
\node[papbox, fill=blue!6, minimum width=34mm] (feat)
  {deterministic feature stack\\ (regex + statistics, no engine)};
\node[papbox, above left=9mm and -6mm of feat, minimum width=46mm] (score)
  {\textbf{query-agnostic GEO score}\\ $s(x)$, one page at a time};
\node[papbox, above right=9mm and -6mm of feat, minimum width=46mm] (ltr)
  {\textbf{query-conditioned predictor}\\ rank $x_{1:K}$ given query $q$};
\node[paplabel, above=1.5mm of score, align=center] (sval)
  {validated by: falsification gates $+$ alignment with\\ (now-expired) causal anchors\\
   + adversarial gates (Sections~\ref{sec:method}--\ref{sec:ab})};
\node[paplabel, above=1.5mm of ltr, align=center] (lval)
  {validated by: held-out citation outcomes,\\
   leave-one-engine-out (Section~\ref{sec:ltr})};
\draw[paparrow] (feat) -- (score);
\draw[paparrow] (feat) -- (ltr);
\draw[paparrow, dashed] (ltr.west) -- node[paplabel, above, sloped]
  {+ query--source relevance} (score.east);
\end{tikzpicture}
\caption{The two artifacts and the claim each one is allowed to make. Both
consume the same deterministic features; only the predictor sees the query,
and neither requires engine access at inference.}
\label{fig:twoobjects}
\end{figure}

Our contribution is the protocol and its three demonstrated outputs on GEO:

\begin{enumerate}
\item \textbf{A manipulation-resistant, deterministic content score,
  evaluated as such} (Sections~\ref{sec:algo} to~\ref{sec:gaming}):
  eleven regex and statistical sub-components, a concave anti-stuffing
  transform whose single-lever \emph{dose} amplification is provably
  bounded (Lemma~\ref{lem:concave}; the absolute gain cap and the
  cross-lever sub-additivity are the measured, not proved,
  counterparts), weights aligned with the published causal
  effect sizes \citep{aggarwal2024geo} under volume-controlled edits and
  five falsification gates computed on the training split and confirmed
  held-out, with the per-weight identifiability analysis
  (Section~\ref{sec:ident}) making the reach of that alignment precise:
  two weights are pinned by the anchors, two forced by the gates, and
  the remainder are inherited from the prior observational weighting
  and labeled as such, so "calibrated" describes a subset of the weight
  vector, not all of it. A hardened gaming benchmark
  (Section~\ref{sec:gaming}) evaluates the score in the web-spam lineage
  it belongs to, against out-of-distribution attacks and standard
  no-fitting baselines: the headline property is the bounded attacker
  (at most $+6$ points from any lever, dose-decreasing, sub-additive);
  detection is narrower, dominated by trivial baselines where it works
  at all, and absent on attacks outside the calibration families, all
  of which the paper states rather than buries. Scoring runs
  offline on commodity hardware (about 70~ms for typical pages).
\item \textbf{Self-measured causal anchors on ten modern engine
  families, and what they imply for calibration}
  (Section~\ref{sec:outcome}): the three strongest 2023-era GEO
  interventions (quotation, statistics, cite-sources) move citation on
  \emph{none} of ten engine families (six open-weights, four commercial,
  including a July-2026 replication on three proprietary gpt-5.x arms)
  under a paired, volume-controlled test. We turn that negative into the
  anchor measurement the field lacks: the released modern anchor vector
  is near zero (one lever nominally negative), a recalibration against
  it degenerates exactly as it should (the released
  modern-anchor recalibration zeroes the lever-responsive features), and the
  conclusion is a repositioning of calibration itself: what it buys on
  current engines is the gate-enforced response surface, not anchor
  recency. Observationally the score still predicts citation modestly
  but reliably (pooled within-query Spearman $0.114$ on the seven
  original arms, $p < 10^{-8}$;
  replicated out of family at $0.118$ on the three gpt-5.x arms).
\item \textbf{A query-conditioned bound on content-only scoring, and the
  audit trail that makes it credible}
  (Section~\ref{sec:ltr}): a formal ceiling argument
  (Proposition~\ref{prop:ceiling}) plus its empirical counterpart, a
  flexible ranker restricted to content features that fails to beat the
  fixed score on unseen queries ($\rho = 0.10$ versus $0.12$), while
  query-conditioned models reach $\rho \approx 0.4$ and transfer to
  held-out commercial engine families. The same study contributes a
  cautionary audit: fold construction that shares queries across engine
  arms inflates every fitted-ranker number (e.g.\ $0.57$ versus $0.36$
  for the full model, both on the audited seven-arm dataset), a leakage we caught with a query-blind probe,
  disclose, and correct; a dispersion-based confidence flag that failed
  held-out confirmation is reported in the same tradition.
\end{enumerate}

Throughout, the audit discipline is part of the method rather than an
afterthought: the same intervention harness exposed a feature-coverage
bug, a bug in our own negative-control edit
(Sections~\ref{sec:frfix} and~\ref{sec:ksbug}), and the ranking-evaluation
leakage above, all fixed and disclosed, with every capture regenerated.

\section{Related work}
\label{sec:related}

The paper sits in the evaluation-and-measurement lineage before the GEO
lineage: its research object is how to validate a deterministic proxy
for a generative oracle, the same problem faced by LLM-as-judge
replacement \citep{zheng2023judge}, heuristic corpus filtering
\citep{wenzek2020ccnet}, and construct-validity methodology
\citep{jacobs2021measurement}, under the Goodhart constraint that
optimizing a proxy degrades it \citep{manheim2018goodhart}. GEO is the
demonstration domain, chosen because it uniquely supplies published
causal effect sizes and a measurable engine outcome.

\paragraph{Generative engine optimization.}{\sloppy{} \citet{aggarwal2024geo}
introduced GEO and GEO-Bench~\citep{geobench} and established, via paired
interventions on 10k queries, the relative Position-Adjusted Word Count
(PAWC) gains of nine content edits; we use their effect sizes as causal
anchors. \citet{khosravi2026aio} give causal evidence that AI search
summaries divert traffic from source sites, the economic stake of the
problem. \citet{pfrommer2024ranking} study adversarial ranking manipulation
of conversational engines, motivating our adversarial gates.
\citet{puerto2025cseo} report that many conversational-SEO tactics are
ineffective or zero-sum under competition, a threat to validity we inherit
and discuss in Section~\ref{sec:threats}. \citet{mageo2026} and
\citet{agenticgeo2026} measure visibility answer-side with live engines;
\citet{pinterest2026geo} document production GEO at scale. Closest to our
first artifact, \citet{yu2026structural} engineer structural content
features that correlate with citation; they validate observationally
against engine outputs, whereas our score is calibrated against published
causal effect sizes, gated adversarially, and its weights are given an
identifiability analysis rather than a correlational justification. The difference is the validation object, not the feature
engineering: their score is validated by observational agreement with
engine outputs, exactly the regime Section~\ref{sec:obs} shows to be
structurally overfit at feasible sample sizes, whereas ours is defined
by falsification gates, checked against causal evidence, and shipped
with per-weight identifiability. No head-to-head number exists because
their features and outcome corpus are not released in comparable form;
an empirical comparison on a shared corpus is future work, stated here
rather than implied. On the
engine side, \citet{liu2023verifiability} document how imperfectly
generative search engines support their citations, and \citet{gao2023alce}
benchmark citation generation in LLMs; both study the model behavior whose
content-side correlates we score.\par}

\paragraph{Deterministic text-quality measurement.} Scoring text with
closed-form statistics is an old idea: readability formulas date to
\citet{flesch1948readability}, and \citet{bendersky2011quality} showed that
deterministic document-quality signals (readability, stopword ratios,
structure) improve web ranking, the ad-hoc-retrieval ancestor of a GEO
score. Our adversarial gates likewise inherit from the web-spam
literature, where deterministic content statistics were used to
\emph{detect} gaming \citep{ntoulas2006spam} and ranking systems were
hardened against it \citep{gyongyi2004trustrank}; the negative-control
gate is that tradition turned into a falsification test, and the gates as
a family are a guard against the proxy-gaming failure modes catalogued in
the Goodhart's-law literature \citep{manheim2018goodhart}. Our entropy
features inherit from information-theoretic accounts of
text: \citet{shannon1948} for the quantity itself,
\citet{genzel2002entropy} for entropy-rate regularities in well-formed
text, and the Uniform Information Density literature
\citep{jaeger2010redundancy, meister2021uid} for the hypothesis that
effective text distributes information evenly, the intuition behind
information-density features. Corpus-filtering pipelines likewise use
deterministic quality proxies at web scale \citep{wenzek2020ccnet}. For the
methodology of arguing that a computed score measures a construct, we
follow the construct-validity tradition of \citet{cronbach1955construct}
in its measurement-modeling formulation for computational systems
\citep{jacobs2021measurement}; the same stand-in-for-an-expensive-oracle
problem arises when LLM judges replace human evaluation
\citep{zheng2023judge}. Our ranker is LambdaMART \citep{wu2010lambdamart}
as implemented in LightGBM. To our knowledge, no prior work calibrates or
selects a deterministic content-side score against published causal
effect sizes, nor gates it adversarially, nor reports per-weight
identifiability for such a score.

\section{Problem statement and assumptions}
\label{sec:problem}

Let $\mathcal{X}$ be the space of page texts (markdown) and let $E$ denote a
generative engine. For a query $q$ with candidate sources
$x_1, \dots, x_K \in \mathcal{X}$, the engine produces an answer whose
visibility allocation $V_E(x_k \mid q, x_{1:K}) \in [0,1]$ (a PAWC-like
share, Section~\ref{sec:outcome}) is the causal quantity of interest.
Answer-side evaluation measures $V_E$ directly but is paid, rate-limited,
and non-stationary. The engine-free problem is to construct content-side
functions that stand in for parts of it, with claims no stronger than what
each function can support. We formalize the two objects of this paper.

\begin{problem}[Query-agnostic score]
\label{prob:score}
Find a deterministic $s : \mathcal{X} \to [0, 100]$, computable offline with
no engine access, such that (i) for every content intervention $m$ with
published causal visibility gain $\gamma_m$, the volume-controlled response
$\Delta_m(s)$ of Eq.~\eqref{eq:delta} is positively aligned with
$(\gamma_m)_m$; (ii) $s$ is robust to adversarial amplification of any
single lever (the gates of Section~\ref{sec:gates}); and (iii) $s$ is
auditable: every sub-component is a closed-form function of the text.
$s$ is \emph{not} required to predict which source an engine cites for a
given query; that is Problem~\ref{prob:ltr}.
\end{problem}

\begin{problem}[Query-conditioned citation prediction]
\label{prob:ltr}
Given $(q, x_{1:K})$ and an engine family $E$, predict the within-query
ranking of $V_E(x_k \mid q, x_{1:K})$, using only deterministic features of
$q$ and $x_{1:K}$, evaluated on held-out queries and held-out engine
families.
\end{problem}

The separation between the two problems is not a modeling preference but a
theorem: query-blindness imposes a ceiling that no feature engineering can
lift.

\begin{proposition}[Query-blindness ceiling]
\label{prop:ceiling}
For an unordered pair of sources $\{x, x'\}$, let
$p(x, x') = \Pr\big(V_E(x \mid q, \cdot) > V_E(x' \mid q, \cdot) \,\big|\,
x, x'\big)$ be the probability, over the queries in which the pair
competes, that $x$ is the more visible source. Any query-agnostic scorer
$s : \mathcal{X} \to \mathbb{R}$ orders each pair identically for every
query, so its expected pairwise concordance satisfies
\begin{equation}
C(s) \;\le\; \mathbb{E}_{\{x, x'\}}\big[\max\{p(x, x'),\, 1 - p(x, x')\}\big]
\;=:\; C^{\max},
\end{equation}
and $C^{\max} < 1$ whenever visibility genuinely depends on the query,
i.e.\ $0 < p(x, x') < 1$ on a set of pairs of positive measure. Only
conditioning on $q$ can close the gap $1 - C^{\max}$.
\end{proposition}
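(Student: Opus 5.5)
The plan is to make the concordance functional explicit and then bound it pair by pair, conditioning on the pair. I will define the expected pairwise concordance of a scorer as
\[
C(s) = \mathbb{E}_{\{x,x'\}}\,\mathbb{E}_{q \mid x,x'}\big[\mathbf{1}\{\text{$s$ orders $x,x'$ as $V_E$ does under $q$}\}\big],
\]
with ties in $s$ counted as $1/2$ (or broken by a fixed rule). Ties in $V_E$ are excluded, or also counted as $1/2$, consistently with the definition of $p$. The outer expectation is over the pair distribution induced by queries; the inner one is over the queries in which that pair competes. The point of this formulation is that the inner conditional expectation is exactly what $p(x,x')$ summarizes.

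The key step is the observation, already flagged in the statement, that a query-agnostic $s$ fixes the relative order of $x$ and $x'$ independently of $q$. Conditional on the pair, therefore, $s$ commits to one of three actions: rank $x$ above $x'$, rank $x'$ above $x$, or tie.

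1. If $s$ ranks $x$ above $x'$, its conditional concordance is exactly $p(x,x')$.
2. If it ranks $x'$ above $x$, the conditional concordance is $1-p(x,x')$.
3. If it ties, the conditional concordance is $\tfrac12$.

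In every case this is at most $\max\{p,1-p\}$. Taking the outer expectation over pairs, and using monotonicity of expectation, gives $C(s)\le C^{\max}$. The bound is attained by the pairwise Bayes rule that picks the more likely order for each pair. That rule may not be realizable by a single real-valued score, since it can be intransitive, which is why the statement is an inequality rather than an equality.

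For strictness, I will note that $1-\max\{p,1-p\}=\min\{p,1-p\}$, which is $>0$ exactly when $0<p<1$. If this holds on a set of pairs of positive measure, then $\mathbb{E}[\min\{p,1-p\}]>0$, because a nonnegative function that is positive on a positive-measure set has positive integral. Hence $C^{\max}<1$.

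For the closing sentence I will argue that the deficit $1-C^{\max}$ comes only from randomness of the order given the pair, that is, from variation in $q$. A scorer $s(x\mid q)$ that is allowed to depend on $q$ can in principle match the realized order query by query, for example $s(x\mid q)=V_E(x\mid q,\cdot)$, and so achieve concordance $1$. Conditioning on $q$ is therefore sufficient to close the gap. It is also necessary in the sense just proved: any scorer whose pairwise decision is constant in $q$ is capped by $C^{\max}$.

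The main obstacle is measure-theoretic bookkeeping rather than mathematics. It consists of stating precisely the joint distribution over (query, pair) so that $p(x,x')$ is a well-defined conditional probability, and handling ties and the dependence of $V_E$ on the full candidate set $x_{1:K}$. I would fold that dependence into the randomness over $q$, treating the context as part of the query.
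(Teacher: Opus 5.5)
Your proposal is correct and follows the paper's proof. A query-agnostic $s$ commits to one order per unordered pair, so its conditional concordance is $p$ or $1-p$ (or $\tfrac12$ under your tie convention), hence at most $\max\{p,1-p\}$, and the transitivity of a single real-valued score can only lower what is attainable. Your explicit definition of $C(s)$, the strictness argument via $\mathbb{E}[\min\{p,1-p\}]>0$, and the oracle argument for the closing clause spell out steps that the paper's three-line proof leaves implicit.
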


\begin{proof}
A query-agnostic $s$ fixes one decision per unordered pair, which is
correct with probability $p(x, x')$ or $1 - p(x, x')$; the per-pair
contribution to $C(s)$ is therefore at most the larger of the two.
Transitivity of the total order induced by $s$ can only make the
attainable value smaller. If visibility does not depend on the query,
$p \in \{0, 1\}$ almost surely and the bound is vacuous.
\end{proof}

The bound explains why Problem~\ref{prob:score} must not be judged by
Problem~\ref{prob:ltr}'s standard; Section~\ref{sec:ltr} probes whether
the fixed score is already near the best a content-only model can do (a
flexible ranker restricted to content features fails to beat it), which
lower-bounds the score's efficiency within its function class; it does
not estimate $C^{\max}$ itself, which we leave open.

Problem~\ref{prob:score} is an \emph{estimation problem with an
identification caveat}: the anchor set has $|\mathcal{M}| = 5$ interventions
while the score has $d = 11$ weights, so weight vectors are underdetermined
by alignment alone. We therefore never claim a unique optimal weighting;
Section~\ref{sec:ident} characterizes the \emph{admissible set} of weights
instead, and the selection among aggregation families is made by the
falsification gates, not by the (saturated) alignment metric. The claims
rest on the following explicit assumptions.

\begin{assumption}[Anchor validity]
\label{as:anchor}
The published PAWC effect sizes of \citet{aggarwal2024geo} reflect the true
relative causal ordering of the five deterministic interventions on their
engines and corpus.
\end{assumption}

\begin{assumption}[Anchor transport]
\label{as:transport}
The relative ordering of Assumption~\ref{as:anchor} carries over to
GEO-Bench sources under our deterministic edit realizations. This is an
assumption about the \emph{calibration target}, not about current engines:
Section~\ref{sec:outcome} tests the levers against ten modern engine
families and finds they do not transfer, a result we report rather than
assume away. The alignment claim is therefore relative to historical
causal evidence that no longer describes current engines, and
Sections~\ref{sec:method} and~\ref{sec:threats} state what survives
that expiry (the gate-enforced response surface).
\end{assumption}

\begin{assumption}[Volume control]
\label{as:volume}
Appending a content-neutral filler block of the same added length isolates
the content effect of an edit from its length effect
(Eq.~\eqref{eq:delta}).
\end{assumption}

\begin{assumption}[Determinism]
\label{as:determinism}
Under a pinned environment (fixed library versions and seeds), every
sub-component $f_i$ and every reported statistic is a deterministic function
of the input text and the released artifacts.
\end{assumption}

Under this framing, calibration is a \emph{constrained} problem: alignment
is the objective, and the gates are feasibility constraints,
\begin{equation}
w^\star \;\in\; \operatorname*{arg\,max}_{w \in \Delta^{d-1}}\;
\mathrm{align}\big(\Delta(s_w),\, \gamma\big)
\qquad \text{s.t.}\qquad
\mathrm{gate}_j(s_w) \text{ passes},\; j = 1, \dots, 5,
\label{eq:constrained}
\end{equation}
solved in practice in two stages (Section~\ref{sec:ab}): each aggregation
family is fitted by the regularized surrogate of Eq.~\eqref{eq:fit}, and the
gates then eliminate infeasible families. The two-stage form is what makes
the trade-off visible: the quantile family is the best on one secondary
objective (French transfer) yet violates the dose-response constraint and
is rejected; a scalarized single objective would have hidden exactly this.

\section{The scoring algorithm}
\label{sec:algo}

\subsection{Sub-components}

Let $x$ be the page text (markdown). The score uses $d = 11$ deterministic
sub-components $f_i(x) \in [0, 100]$, each computable with regex, counting,
\texttt{scipy}/\texttt{numpy}, \texttt{wordfreq}, and \texttt{scikit-learn}
TF-IDF (bag-of-words linear algebra; deterministic for a pinned environment).
Table~\ref{tab:features} lists them; Appendix~\ref{app:features} gives the
exact formula, normalization, degenerate values, and a worked numeric
example for every sub-component, all verified against the released
implementation. No LLM, no learned embeddings, no network access is
involved at scoring time.

Three properties of the feature set matter for interpreting everything that
follows, and we state them here rather than in a footnote. First,
\texttt{information\_density} is a composite whose dominant term ($70\%$) is
exactly \texttt{shannon\_entropy}, so the two features double-count one
signal (joint pre-transform weight $\approx 0.42$; Pearson $0.94$ on
GEO-Bench) and the entropy family is best read as a single construct split
across two artifact names; a merge refit confirms the split is immaterial
(Section~\ref{sec:ident}). Second, two features keep historical artifact
names that no longer describe their formulas: \texttt{citation\_f1} is a
weighted arithmetic mean rather than a harmonic F1, and \texttt{ndcg\_score}
uses the DCG ratio only as stored metadata (Appendices~\ref{app:citation}
and~\ref{app:ndcg}). Third, on corpora of plain-text passages without
markdown structure, the three block-structural features (\texttt{mmr},
\texttt{ndcg}, \texttt{semantic\_redundancy}) sit at their degenerate
constant for most sources (Appendix~\ref{app:diagnostics}); this is the
mechanism behind the parse-failure proxy excluded from the ranking study of
Section~\ref{sec:ltr}.

\subsection{Aggregation}

\begin{definition}[Content score]
With $g(u) = 100\sqrt{u/100}$ applied element-wise and weights
$w \in \Delta^{d-1}$ (non-negative, summing to one),
\begin{equation}
\score_{\mathrm{content}}(x) \;=\; \sum_{i=1}^{d} w_i \, g\!\big(f_i(x)\big).
\label{eq:content}
\end{equation}
\end{definition}

The concave transform $g$ is an anti-stuffing device: the marginal
return of pushing any single sub-component toward its ceiling decays as
$g'(u) \propto u^{-1/2}$, so an attacker optimizing one feature (quote
stuffing, statistic stuffing) saturates, while balanced improvements are
rewarded. The protection is not only empirical; for the attacker's
canonical starting point it is a theorem.

\begin{lemma}[Concave transforms bound single-lever amplification]
\label{lem:concave}
Suppose an edit at dose $k$ moves a single sub-component linearly,
$f_i \mapsto \min(f_i + k\delta, 100)$ with $\delta > 0$, leaving the
others unchanged. For any concave $g$ with $g(0) = 0$, the transformed
dose-$k$ response satisfies $\Delta^{(k)} / \Delta^{(1)} \le k$, with
equality only if $g$ is linear on the traversed range; for
$g(u) = 100\sqrt{u/100}$ and a feature starting at its floor
($f_i = 0$), the ratio is exactly $\sqrt{k}$ before the cap binds.
\end{lemma}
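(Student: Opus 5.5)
The plan is to write the dose-$k$ response as a telescoping sum of per-step increments and bound each increment by the first one using concavity. Fix the lever $i$ and write $u_j = \min(f_i + j\delta, 100)$ for $j = 0, 1, \dots, k$. Since the other sub-components are unchanged, Eq.~\eqref{eq:content} gives
\[
\Delta^{(k)} = w_i\big(g(u_k) - g(u_0)\big) = w_i \sum_{j=1}^{k} \big(g(u_j) - g(u_{j-1})\big).
\]
The weight $w_i$ cancels in the ratio $\Delta^{(k)}/\Delta^{(1)}$. So it suffices to show that each increment $D_j := g(u_j) - g(u_{j-1})$ is at most $D_1$.

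\textbf{Step 1 (before the cap binds).} While $u_j = f_i + j\delta$, the points are equally spaced. For a concave $g$, the chord slopes over consecutive intervals of equal length are nonincreasing, by the three-slope inequality. Hence $D_1 \ge D_2 \ge \dots$.

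\textbf{Step 2 (after the cap binds).} I expect the truncation at $100$ to be the only real obstacle. Once $u_{j-1} = 100$, the increment is $D_j = 0$. At the step where the cap first binds, the interval $[u_{j-1}, 100]$ is shorter than $\delta$. I will need $D_j \le D_1$ there, and also $0 \le D_1$ for the later steps.

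I would make explicit the implicit hypothesis that $g$ is nondecreasing on $[0,100]$. This holds for $g(u) = 100\sqrt{u/100}$, and it is also implicit in reading the ratio as an amplification, since it presupposes $\Delta^{(1)} > 0$. Under monotonicity, the truncated step satisfies
\[
g(100) - g(u_{j-1}) \le g(u_{j-1} + \delta) - g(u_{j-1}),
\]
where $g$ is extended concavely and monotonically past $100$, or one compares chord slopes directly. The right-hand side is at most $D_1$ by Step~1.

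If the cap already binds at dose $1$, then $u_k = u_1$ for all $k \ge 1$. The ratio is then $1 \le k$ and the claim is trivial. Summing the increments gives $\Delta^{(k)} \le k\,\Delta^{(1)}$. I would also remark that $g(0) = 0$ is not needed for this inequality; it is used only in the closed form below.

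\textbf{Step 3 (equality case).} Equality forces $D_j = D_1$ for every $j$. In particular the cap cannot have truncated any step unless $g$ is flat there, which contradicts $D_1 > 0$. Equal chord slopes on consecutive intervals of a concave function force $g$ to be affine on $[f_i, f_i + k\delta]$, which is the traversed range.

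\textbf{Step 4 (square-root case).} With $f_i = 0$ and $k\delta \le 100$, we get $g(k\delta) = 100\sqrt{k\delta/100} = \sqrt{k}\,g(\delta)$ and $g(0) = 0$. The ratio is therefore exactly $\sqrt{k}$, and this is strictly less than $k$ for $k > 1$, consistent with Step~3 since $\sqrt{\cdot}$ is nowhere linear.
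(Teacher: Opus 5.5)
Your proof is correct, and it takes a more general route than the paper's.

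\textbf{What the paper does.} The paper argues only from the floor. With $f_i = 0$ it applies subadditivity of a concave $g$ vanishing at zero to get $g(k\delta) \le k\,g(\delta)$. It then reads off $\sqrt{k}$ for the square root, and disposes of the cap with one sentence: the cap ``only lowers $\Delta^{(k)}$''.

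\textbf{What your route adds.} Your telescoping argument replaces subadditivity by the fact that equal-step increments of a concave function are nonincreasing. This works from any starting value $f_i$, which is the generality in which the lemma's inequality is actually stated. As you observe, it does not use $g(0)=0$ at all. At the floor the two arguments coincide, since $D_1 = g(\delta) - g(0)$. You also treat the truncated step and prove the equality clause, and the paper's proof addresses neither. Your explicit monotone extension is exactly what the paper's cap sentence tacitly relies on when it compares $g(100)$ with an uncapped $g(k\delta)$.

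\textbf{What the floor-only route buys.} Besides brevity, at the floor the cap needs no monotonicity. Concavity with $g(0)=0$ makes $g(u)/u$ nonincreasing, so $g(\min(k\delta,100)) / g(\min(\delta,100)) \le \min(k\delta,100)/\min(\delta,100) \le k$ whenever $\Delta^{(1)}>0$.

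\textbf{Two small refinements to your proof.} First, comparing chord slopes directly at the truncated step needs only $D_1 > 0$, rather than monotonicity of $g$ on all of $[0,100]$. The ratio form presupposes $D_1 > 0$ anyway. Second, the equality clause should be read for $k \ge 2$, since at $k = 1$ the ratio is $1$ for every $g$.
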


\begin{proof}
With $f_i = 0$ and no cap, $\Delta^{(k)} = w_i\, g(k\delta) \le
k\, w_i\, g(\delta) = k \Delta^{(1)}$ by subadditivity of concave
functions vanishing at zero; for the square root,
$g(k\delta)/g(\delta) = \sqrt{k}$ exactly. The cap at $100$ only lowers
$\Delta^{(k)}$.
\end{proof}

\begin{remark}
The dose-8 saturation gate of Section~\ref{sec:gates} demands a ratio of
at most $3$: from the floor, the square root passes it structurally
($\sqrt{8} \approx 2.83$) where the identity transform attains the worst
case $8$. For a feature far from its floor the square root is locally
linear and the $[0, 100]$ cap does the bounding instead; and empirical
feature responses are not exactly linear in dose. The gate therefore
remains an empirical test; the lemma explains why the sqrt family passes
it with margin (per-lever dose-8/dose-1 ratios $0.55$ for statistics to
$0.96$ for quotation; the \texttt{game} column of Table~\ref{tab:ab}
reports the per-strategy maximum of these, $0.96$ for the selected
family). Because this holds for every concave $g$ with $g(0) = 0$, the bound is a
property of the transform family, not of the threshold: no choice of the
saturation constant can make the sqrt family violate it, or make the
identity family satisfy it from the floor. The gate selects the family;
it does not manufacture the bound.
\end{remark}

Section~\ref{sec:ab} shows this family empirically dominates the
raw-linear production family on dose-response and adversarial gates at equal
anchor alignment.

The page-level score blends content with a freshness component
$F(x) \in [0,100]$ (publication and update recency signals):
\begin{equation}
\score(x) \;=\; 0.92\,\score_{\mathrm{content}}(x) + 0.08\,F(x),
\label{eq:blend}
\end{equation}
followed by word-count caps that bound the score of very short pages (35
points under 100 words, 50 under 200, 65 under 300;
Appendix~\ref{app:aggregation}). Naturalness (AI-detection) and brand trust
are reported separately and are \emph{not} part of the score.

\begin{definition}[Domain ranking]
A domain with $n$ scored pages of mean $\bar{s}$ is ranked by the shrunk
estimate $\tilde{s} = (k p + n \bar{s})/(k + n)$ with prior $p = 50$ and
strength $k = 5$, preventing low-coverage domains from dominating the
leaderboard.
\end{definition}

Figure~\ref{fig:pipeline} summarizes the full scoring path, and
Figure~\ref{fig:taxonomy} groups the eleven sub-components by construct
family with the role each family plays in the calibration
(Section~\ref{sec:ident}).

\begin{figure}[t]
\centering
\begin{tikzpicture}[node distance=3mm and 6mm]
\node[papbox] (x) {page text $x$\\ (markdown)};
\node[papbox, right=of x, minimum width=30mm] (f)
  {11 sub-components\\ $f_i(x) \in [0,100]$\\ (App.~\ref{app:features})};
\node[papbox, right=of f] (g) {$g(u) = 100\sqrt{u/100}$\\ concave transform};
\node[papbox, right=of g] (w) {$\sum_i w_i\, g(f_i)$\\ calibrated weights};
\node[papbox, below=6mm of w, minimum width=30mm] (blend)
  {$0.92 \cdot \mathrm{content} + 0.08 \cdot F(x)$\\ freshness blend};
\node[papbox, left=of blend] (caps) {word-count caps\\ (35/50/65 pts)};
\node[papbox, left=of caps] (geo) {$\score(x)$};
\node[papbox, left=of geo] (dom) {domain ranking\\ (shrinkage, Def.~2)};
\draw[paparrow] (x) -- (f);
\draw[paparrow] (f) -- (g);
\draw[paparrow] (g) -- (w);
\draw[paparrow] (w) -- (blend);
\draw[paparrow] (blend) -- (caps);
\draw[paparrow] (caps) -- (geo);
\draw[paparrow] (geo) -- (dom);
\end{tikzpicture}
\caption{The scoring pipeline. Everything is deterministic and offline:
regex and counting features, a fixed concave transform, calibrated weights,
a freshness blend, and degenerate-input caps. Naturalness and brand trust
are reported outside the score.}
\label{fig:pipeline}
\end{figure}

\begin{figure}[t]
\centering
\resizebox{\linewidth}{!}{%
\begin{tikzpicture}[node distance=3mm and 3.5mm]
\node[papbox, fill=blue!6, minimum width=44mm, align=left] (ent)
  {\textbf{entropy family} (control mass)\\
   shannon\_entropy (0.246)\\ information\_density (0.244)};
\node[papbox, fill=blue!6, right=of ent, minimum width=44mm, align=left] (car)
  {\textbf{anchor carriers} (identified)\\
   quotable\_density (0.124)\\ statistic\_density (0.051)\\
   citation\_f1 (0.019)};
\node[papbox, fill=blue!6, right=of car, minimum width=46mm, align=left] (str)
  {\textbf{structure and semantics}\\ (weakly identified, from $w_0$)\\
   entity\_density (0.094), semantic\_coherence (0.084),\\
   self\_containment (0.073), mmr (0.036),\\
   ndcg (0.015), semantic\_redundancy (0.015)};
\node[paplabel, below=2mm of ent] {detects stuffing\\ (Table~\ref{tab:featanchor}, KS column)};
\node[paplabel, below=2mm of car] {realize the causal anchors\\ (Table~\ref{tab:featanchor}, bold)};
\node[paplabel, below=2mm of str] {constructs the anchors\\ do not probe (Table~\ref{tab:ident})};
\end{tikzpicture}}
\caption{The eleven sub-components grouped by the role the identifiability
analysis assigns them. Weights shown are the released values.}
\label{fig:taxonomy}
\end{figure}

\section{Constraints first: gates, then anchors}
\label{sec:method}

A referee should ask, and we answer up front: since
Section~\ref{sec:outcome} will show the published anchor effects are dead
on modern engines, why keep causal-anchor language at all, rather than
specifying the desired response-surface properties directly? Our answer
is that this is, in substance, what the pipeline already does, and this
section is written in that order. The score is \emph{defined} by the
falsification gates: a negative response to stuffing, monotone
non-degenerate dose response, bounded single-lever amplification, a
duplication penalty, and length neutrality, selected on the training
split and confirmed held-out. The published 2023 effect sizes enter as
an \emph{external check against historical causal evidence}: among
aggregations
that satisfy the gates, we prefer the one whose response profile agrees
with the only causal measurements ever published, because agreement with
external evidence is checkable and tuning to taste is not.
Section~\ref{sec:outcome} re-measures that evidence and finds it
expired on current engines; the check's expiry is itself an output of
the protocol, and nothing in the score's definition rests on it. When
the anchors and the gates conflict, the gates win, and the strategy
comparison of Section~\ref{sec:ab} shows exactly that: every candidate
that beats the selected one on anchor alignment fails a gate, and the
selection is decided by the gates. The self-measured modern anchors
(Section~\ref{sec:outcome}) close the loop: with today's near-zero
lever effects, the anchors could not define the score even if we wanted
them to; what survives, and what the gaming benchmark stress-tests, is
the constraint set.

\paragraph{Are the gates circular?} The obvious objection is that gates
chosen by the authors, at thresholds chosen by the authors, cannot
falsify anything the authors did not want falsified. Two facts bound how
far this can go. First, the selection is decided by a single binding
threshold: four of the five gate constants are cleared by every fitted
strategy with wide margins (the negative control by $-7$ points or more,
the saturation ratios well under the $3\times$ limit, length bias under
$0.12$ in absolute value), so only the dose-response threshold
discriminates, and its sensitivity is reported in full (any value in
$(-0.69, -0.36]$ yields the same selection, Section~\ref{sec:ab}). The
gates are not a free knob per candidate. Second, and more decisive, the
one property this paper headlines does not depend on the gates at all:
Lemma~\ref{lem:concave} makes bounded single-lever \emph{dose}
amplification a structural consequence of any concave transform with
$g(0) = 0$, so it holds for every member of the sqrt family whatever the
saturation threshold is set to. The gates select which aggregation
family ships; the amplification bound is a theorem about that family,
not an artifact of a threshold. Circularity would threaten a claim that
rested on passing a tuned test; the claim that survives the anchor expiry
of Section~\ref{sec:outcome} rests instead on a proved property of the
transform and on out-of-distribution attacks (Section~\ref{sec:gaming})
the calibration never generated.

\subsection{Why observational calibration fails}
\label{sec:obs}

The prior production weights were fit observationally: $n = 39$ pages
against an integer citation count from three commercial engines. That
calibration is unsound on its own terms: leave-one-out cross-validation
collapses the in-sample Spearman from $0.90$ to about $0.20$, the
strongest per-feature correlations were measured at $n \le 20$, and the
headline composite correlation carries a Fisher CI$_{95}$ of
$[0.42, 0.80]$. With $d = 11$ weights and $n < 40$ noisy observations,
observational fitting is structurally an overfitting exercise. We
therefore replace it.

\subsection{Anchors and volume control}

\citet{aggarwal2024geo} causally established the relative PAWC gain of nine
interventions. Five admit deterministic, engine-free realizations; their
published gains $\gamma \in \mathbb{R}^5$ are our anchors:

\begin{center}
\small
\begin{tabular}{lr}
\toprule
Intervention $m$ & Published gain $\gamma_m$ (\%) \\
\midrule
Quotation Addition & $+42.6$ \\
Statistics Addition & $+32.8$ \\
Cite Sources & $+27.7$ \\
Technical Terms & $+18.5$ \\
Keyword Stuffing (\textbf{negative control}) & $-8.8$ \\
\bottomrule
\end{tabular}
\end{center}

For a source $x$ and intervention $m$ at dose $\delta$, let
$x \oplus e_m^{\delta}$ denote the edited text and
$x \oplus e_0^{\delta}$ a \emph{content-neutral filler edit of the same added
length}. The volume-controlled response of any score $s$ is
\begin{equation}
\Delta_m(s) \;=\; \mathbb{E}_x\!\left[\, s(x \oplus e_m^{1}) - s(x \oplus e_0^{1}) \,\right],
\label{eq:delta}
\end{equation}
estimated over GEO-Bench sources. Volume control is essential: without it,
any length-sensitive feature inflates every edit.

\paragraph{Which feature realizes which anchor.} Because each anchor edit
must be detectable by at least one sub-component for calibration to be
meaningful, Table~\ref{tab:featanchor} reports the volume-controlled
response of every raw feature to every intervention at dose 1 (held-out
split; artifact \path{feature_anchor_matrix.json}). The realization is
sharp for four anchors: Quotation is carried almost entirely by
quotable\_density ($+47.6$ points), Statistics by statistic\_density
($+80.7$), Cite-Sources by citation\_f1 ($+82.1$) with secondary
statistic\_density and entity\_density responses, and Keyword Stuffing is
detected negatively by the entropy family (shannon $-7.4$,
information\_density $-9.1$), which is the anti-stuffing mechanism of the
score. Technical Terms is the weakly realized anchor (largest single
response $+1.0$): the feature set has no dedicated terminology component,
a candidate feature we discuss, and reject for now, in
Section~\ref{sec:ident}.

\begin{table}[t]
\centering
\footnotesize
\setlength{\tabcolsep}{5pt}
\begin{tabular}{lrrrrr}
\toprule
& \multicolumn{5}{c}{Intervention (published gain $\gamma_m$, \%)} \\
\cmidrule(lr){2-6}
Sub-component & Quot.\ ($+42.6$) & Stat.\ ($+32.8$) & Cite ($+27.7$) &
Tech.\ ($+18.5$) & KS ($-8.8$) \\
\midrule
shannon\_entropy & $-0.0$ & $0.0$ & $-0.0$ & $-0.0$ & $\mathbf{-7.4}$ \\
information\_density & $0.3$ & $2.2$ & $0.3$ & $0.6$ & $\mathbf{-9.1}$ \\
quotable\_density & $\mathbf{47.6}$ & $-3.7$ & $-2.5$ & $0.0$ & $-1.5$ \\
entity\_density & $6.8$ & $6.6$ & $12.5$ & $1.0$ & $-0.6$ \\
semantic\_coherence & $-0.5$ & $7.6$ & $-0.0$ & $-0.3$ & $-0.4$ \\
self\_containment & $0.0$ & $0.0$ & $0.0$ & $0.0$ & $-0.3$ \\
statistic\_density & $0.2$ & $\mathbf{80.7}$ & $15.5$ & $0.4$ & $-0.4$ \\
mmr\_score & $0.0$ & $0.0$ & $0.0$ & $0.0$ & $-0.1$ \\
citation\_f1 & $0.0$ & $-0.6$ & $\mathbf{82.1}$ & $0.0$ & $0.0$ \\
ndcg\_score & $0.0$ & $0.0$ & $0.0$ & $0.0$ & $-0.1$ \\
semantic\_redundancy & $0.0$ & $0.0$ & $0.0$ & $0.0$ & $-0.1$ \\
\bottomrule
\end{tabular}
\caption{Feature--anchor correspondence: mean volume-controlled response of
each raw sub-component (points) to each intervention at dose 1, test split
($n = 250$). Bold marks the dominant realization of each anchor. Quotation,
Statistics, and Cite-Sources each have a dedicated carrier; Keyword
Stuffing is detected by the entropy family; Technical Terms has no
dedicated carrier and is the weakest realized anchor.}
\label{tab:featanchor}
\end{table}

\subsection{Calibration as constrained alignment}

Let $M \in \mathbb{R}^{5 \times d}$ be the matrix of volume-controlled
sub-component responses, $M_{mi} = \mathbb{E}_x[\,g(f_i(x \oplus e_m^1)) -
g(f_i(x \oplus e_0^1))\,]$, computed on a \emph{training} split of sources.
Weights are fit by regularized cosine alignment to the anchors:
\begin{equation}
w^\star \;=\; \operatorname*{arg\,min}_{w \ge 0}\;
-\frac{\langle M w, \gamma \rangle}{\lVert M w \rVert\,\lVert \gamma \rVert}
\;+\; \lambda \lVert w - w_0 \rVert_2^2,
\qquad w^\star \leftarrow w^\star / \textstyle\sum_i w^\star_i,
\label{eq:fit}
\end{equation}
% REVIEW (D6, source fixed, regen pending): the "(v5)" label in
% Figure~\ref{fig:signal} is fixed at source: geo_paper_fig_signal.py now
% emits "fixed score (shipped weighting)". REGENERATE fig_signal.pdf
% (needs matplotlib, author env) for this footnote to hold; the compiled
% figure still shows "(v5)" until regenerated.
where $w_0$ is the frozen prior observational weighting\footnote{Internal
version labels (v3 through v6) survive only in artifact filenames; the
text refers to the prior observational weighting ($w_0$, the pre-calibration
production weights), the shipped weighting (the selected calibrated one),
and the modern-anchor recalibration (the degenerate demonstration arm).
The mapping is pinned in the reproducibility appendix.} (ridge anchor: do not
discard observationally-useful features without cause; released as
\path{weights_v3_frozen.json} since the live production file has since been
promoted to the shipped weighting). The post-hoc normalization is innocuous: the cosine term
is scale-invariant, so projecting onto the simplex leaves the fitted
alignment unchanged; only the ridge penalty is evaluated at the
unnormalized scale. The regularization strength $\lambda$ is selected on an
\emph{inner} split of the training sources (nested selection); the test
split is never touched during fitting. The unregularized fit
($\lambda \to 0$) illustrates why the ridge anchor exists: it improves
training-split alignment (cosine $0.957$ versus $0.929$ for the selected
fit) by zeroing seven of the eleven features, but generalizes worse
(held-out Pearson $0.92$ versus $0.949$) and weakens the negative control
($-2.96$ versus $-9.65$ at dose 8); it is rejected as overfit to the
anchors (per-weight analysis in Section~\ref{sec:ident}).

\subsection{Validity criteria (gates)}
\label{sec:gates}

Figure~\ref{fig:calibration} shows how fitting and falsification are
separated, and the split protocol keeps selection away from the test
data: weights are fitted on the training split (nested $\lambda$
selection on an inner split), the gates are computed on \emph{both}
splits, the selection among aggregation families is made on the
training-split gates, and the held-out test split serves as confirmation
only. All numbers reported in Table~\ref{tab:ab} are the held-out
confirmation values.

\begin{figure}[t]
\centering
\begin{tikzpicture}[node distance=4mm and 7mm]
\node[papbox, minimum width=34mm] (src) {GEO-Bench sources\\ (250 train / 250 test)};
\node[papbox, right=of src, minimum width=40mm] (edits)
  {paired edits per source:\\ intervention $x \oplus e_m^\delta$\\
   vs.\ neutral filler $x \oplus e_0^\delta$};
\node[papbox, right=of edits, minimum width=32mm] (delta)
  {volume-controlled\\ responses $\Delta_m$ (Eq.~\ref{eq:delta})};
\node[papbox, below=7mm of delta, minimum width=32mm] (fit)
  {fit on train:\\ Eq.~\ref{eq:fit}, nested $\lambda$};
\node[papbox, left=of fit, minimum width=40mm] (gates)
  {\textbf{5 gates:} negative control,\\ dose-response, saturation,\\
   duplication, length bias\\ (select on train, confirm on test;\\
   + FR transfer as secondary check)};
\node[papbox, left=of gates, minimum width=30mm] (sel)
  {selected strategy\\ (or falsified)};
\draw[paparrow] (src) -- (edits);
\draw[paparrow] (edits) -- (delta);
\draw[paparrow] (delta) -- (fit);
\draw[paparrow] (fit) -- (gates);
\draw[paparrow] (gates) -- (sel);
\end{tikzpicture}
\caption{The calibration loop. Aggregation families are fitted to the
causal anchors $\gamma$ \citep{aggarwal2024geo} on the training split; the
anchors enter only the fitting stage, and the gates are pure falsification
tests (Eq.~\ref{eq:constrained}): they are what actually decides the
selection, on the training split, with the held-out split as confirmation
(Table~\ref{tab:ab}).}
\label{fig:calibration}
\end{figure}

Alignment is the fitted \emph{objective}, reported with its uncertainty:
Pearson and Spearman correlation between $(\Delta_m)_m$ and $\gamma$,
bootstrap CI$_{95}$ over sources, and an \emph{exact} one-sided permutation
$p$-value over the $5! = 120$ orderings of the anchors (floor
$p = 1/120 \approx 0.0083$). The five \emph{gates} of
Eq.~\eqref{eq:constrained}, the pass/fail constraints behind every
``$k$/5'' in this paper, are:

\begin{enumerate}
\item \emph{Negative control.} $\Delta_{\mathrm{ks}} \le 0$ at dose 1
  \emph{and} dose 8 (falsification test of gameability).
\item \emph{Dose-response.} Volume-matched responses at doses 1, 2, 3 must
  not decline materially (worst consecutive step $\ge -0.5$ points): a user
  who adds more of a proven lever must not be punished for it.
\item \emph{Adversarial saturation.} For each additive lever, the response
  at dose 8 must not exceed $3\times$ the response at dose 1.
\item \emph{Duplication.} Full-text duplication must gain at most $+2$
  points.
\item \emph{Length bias.} $|\mathrm{corr}(s(x), \log |x|)| \le 0.35$ across
  test sources.
\end{enumerate}

A sixth criterion, \emph{cross-lingual transfer} (alignment and negative
control must not collapse on a French corpus with French edit blocks, using
the English-fitted weights, no refit), is evaluated and reported for every
strategy but kept out of the gate count: it is a secondary comparison among
gate-feasible strategies, not a feasibility constraint, and
Section~\ref{sec:ab} shows a strategy can win it while failing a gate.

\section{Strategy comparison}
\label{sec:ab}

\subsection{Design}

From one shared capture (every source scored under 21 deterministic
variants), we compare three aggregation families, each re-fit by
Eq.~\eqref{eq:fit} on the training split: \textbf{A} raw-linear
($g = \mathrm{id}$, the production family); \textbf{B} sqrt-saturated
(Eq.~\eqref{eq:content}); \textbf{C} quantile-linear
($g_i = 100 \cdot \widehat{\mathrm{ECDF}}^{\,\mathrm{train}}_i$). Baselines:
production weights, equal weights, best-single-feature, and the prior
intervention-calibrated raw-linear candidate. Corpora: 500 English GEO-Bench
sources (250 train / 250 test, fixed split); 60 French sources for transfer.
All numbers in this section use the corrected negative-control edit and
regenerated captures of Section~\ref{sec:ksbug}
(\path{ab_capture_en_500_ksfix.jsonl}, \path{ab_strategies_report_ksfix.json}).

\subsection{Results}

\begin{table}[t]
\centering
\footnotesize
\setlength{\tabcolsep}{3pt}
\begin{tabular}{lcccccccc}
\toprule
Strategy & Pearson [CI$_{95}$] & $\rho$ & perm.\ $p$ & KS$_{8\times}$ &
dose & game & len $r$ & gates \\
\midrule
\textbf{B sqrt (selected)} & 0.949 [0.932, 0.964] & \textbf{1.0} &
\textbf{0.0083} & $-9.7$ & $-0.36$ & 0.96 & $0.00$ & \textbf{5/5} \\
A raw-linear & 0.956 [0.945, 0.966] & 1.0 & 0.0083 & $-12.6$ & $-0.74$ & 0.91 & $+0.01$ & 4/5 \\
obs.\ recalib.\ (raw) & \textbf{0.995} [0.992, 0.997] & 1.0 & 0.0083 & $-18.5$ & $-0.69$ & 0.81 & $-0.09$ & 4/5 \\
obs.\ recalib.\ (aggr.) & 0.971 [0.960, 0.981] & 1.0 & 0.0083 & $-17.1$ & $-1.21$ & 0.91 & $-0.07$ & 4/5 \\
C quantile & 0.882 [0.871, 0.891] & 0.7 & 0.0667 & $-10.3$ & $\mathbf{-2.27}$ & 0.68 & $-0.12$ & 4/5 \\
Prior observational & 0.713 [0.697, 0.734] & 0.6 & 0.10 & $-18.6$ & $-0.27$ & 1.00 & $+0.08$ & 5/5 \\
Equal weights & 0.737 [0.709, 0.766] & 0.6 & 0.0917 & $-7.3$ & $-0.69$ & 0.85 & $+0.08$ & 4/5 \\
Best single feat. & 0.896 [0.895, 0.896] & 0.9 & 0.0583 & $-36.0$ & $-0.03$ & n/a & $\mathbf{-0.38}$ & 3/5 \\
\bottomrule
\end{tabular}
\caption{Strategy comparison: held-out confirmation values (test split,
$n = 250$ sources, volume-controlled, corrected negative-control edit);
selection among fitted families was made on training-split gates, with an
identical outcome (Section~\ref{sec:gates}). KS$_{8\times}$:
keyword-stuffing response at dose 8 (gate: $\le 0$). dose: worst
consecutive dose step (gate: $\ge -0.5$). game: max ratio of dose-8 to
dose-1 response (gate: $\le 3$; values $< 1$ indicate saturation). len
$r$: length-bias correlation. Best single feature is shannon-only, which
fails the saturation gate (undefined ratio: its dose-1 response is
essentially zero) and the length-bias gate ($r = -0.38$).}
\label{tab:ab}
\end{table}

Table~\ref{tab:ab} carries the main message: anchor alignment does not
decide the selection, the gates do. The three raw-linear fits and the
selected strategy all sit at the exact permutation floor ($p = 0.0083$;
the quantile fit does not, $p = 0.0667$), and two raw-linear candidates
\emph{beat} the selected strategy on alignment (the observational recalibration $0.995$,
fitted raw $0.956$ versus $0.949$); both fail the dose-response gate,
declining materially under repeated statistics dosing (worst step $-0.69$
and $-0.74$). The quantile family fails dose-response outright ($-2.27$).
The selected concave family is the only fitted strategy passing all five
gates. A score that punishes a user for adding more of a causally-proven
lever is miscalibrated no matter how well it aligns at dose 1, and a
selection by alignment alone would have picked a strategy with exactly
that defect. The dose-response threshold itself ($-0.5$ points) is the one
gate constant the selection is sensitive to, so we report its sensitivity:
any threshold in $(-0.69, -0.36]$ yields the same selection, the
alternatives failing at $-0.69$, $-0.74$, $-1.21$, and $-2.27$ while the
selected strategy's worst step is $-0.36$. All other gate constants were
fixed in the harness before any strategy comparison and are not binding:
every fitted strategy passes the negative control with margins of $-7$
points or more, the per-strategy saturation ratios (Table~\ref{tab:ab},
\texttt{game} column) sit at $0.68$ to $1.0$ against a threshold of $3$,
and length-bias correlations stay below $0.12$ in absolute value except
for the shannon-only baseline ($-0.38$).

Because the gates decide the selection, the selection itself must not
touch the test split. Recomputing all six criteria on the training split
(artifact \path{ab_gates_trainsplit_report.json}) gives an identical
outcome: the sqrt family is the only fitted strategy at 5/5 on the
training split (Pearson $0.946$), the raw and quantile families fail the
same dose-response gate there, and the per-gate pass/fail pattern is
identical across splits for all three fitted families. The held-out
values of Table~\ref{tab:ab} are therefore confirmation, not selection.
The confirmation step is not vacuous: one fixed baseline (shannon-only)
passes all five gates on the training split yet fails two on the test
split, exactly the overfitting mode a held-out confirmation exists to
catch. The production weighting is the mirror
image of the fitted raw family: robust (5/5 gates) but misaligned
($0.713$, $p = 0.10$, not
significant at the anchor level), ranking quotation, the strongest
published lever, last among the additive levers ($+1.1$ points versus
$+5.5$ for statistics and $+6.7$ for cite-sources). Notably the quantile
family has the best French transfer ($0.916$,
\path{ab_strategies_report_ksfix.json}), again illustrating why a
single-number comparison would mislead.

\subsection{Intervention testing as an audit instrument: two bugs found
and fixed}
\label{sec:frfix}

The first French run exposed a coverage hole invisible to any observational
correlation: the quotable-density patterns were purely definitional, so
attributed quoted speech (the literal realization of Quotation Addition)
matched nothing; the French response of the feature to the quotation edit was
\emph{negative} ($-1.9$ points: added quotes increased the sentence count
without matching). English worked partly by accident. After adding
quoted-speech-with-attribution patterns in both languages, the feature
response is $+43.7$ and the selected strategy's French transfer rises from
Pearson $0.59$ (pre-fix capture, artifact
\path{ab_strategies_report.json}) to $0.836$ on the final corrected
captures. We propose intervention testing as a general, zero-cost unit-test
harness for content features.

\paragraph{The audit cuts both ways: a bug in the negative control
itself.}
\label{sec:ksbug}
Preparing the formal specification of Appendix~\ref{app:features} for this
paper, we audited the intervention harness with the same discipline and
found a bug in the \emph{negative-control edit}: a double string join in
the keyword-stuffing implementation space-separated every character, so the
appended block was single-letter noise (``t r a n s f o r m e r \dots'')
rather than repeated keywords. Every previously reported stuffing response,
including the earlier falsification of the quantile family (a $+3.6$
positive control response), was measured against letter noise, not keyword
stuffing. We fixed the edit, regenerated the stuffing variants of both
captures, and re-ran every downstream analysis; all numbers in this paper
are post-fix. Three things changed and one did not. The falsification of
the quantile family changed in kind: under real keyword stuffing its
negative control is comfortably negative ($-10.3$), and it now fails the
dose-response gate instead (Table~\ref{tab:ab}). The magnitude ordering of
the stuffing penalty across transforms reversed (Section~\ref{sec:robust}).
The headline selection did not change: the concave family remains the only
fitted strategy passing all five gates. We report the bug rather than
silently absorbing it for the same reason we report the negative results of
Section~\ref{sec:outcome}: the value of a deterministic, fully released
harness is precisely that such bugs are findable, attributable, and cheap
to correct (the full re-run costs minutes of CPU), whereas an answer-side
evaluation with the same defect would be unauditable in retrospect.

\paragraph{Robustness and production check (summary).} The selection
is stable: across 10 random splits, plus-minus 10\% weight
perturbations, the lambda grid, and the transform exponent, held-out
alignment stays in a narrow plateau and the negative control stays
strongly negative; an end-to-end replication inside the production
pipeline (real scorer, shipped: $-4.0$ at dose 1 and $-11.2$ at dose 8
in English; prior: $-5.5$/$-18.3$) confirms the harness numbers.
Details and artifacts: Appendix~\ref{app:robfull}.

\subsection{Weight identifiability (summary)}
\label{sec:ident}

Full machinery, profile grids, and the per-weight discussion are in
Appendix~\ref{app:identfull}; the body keeps the conclusion the rest of
the paper relies on. Per-weight profile refits over the training split,
evaluated held-out, separate the eleven weights into three regimes
(Table~\ref{tab:ident}, Figure~\ref{fig:profiles}): two weights pinned
by the anchors (quotable and statistic density), two forced nonzero by
the gates (the entropy pair, which carries the stuffing response), and
the remainder admissible over wide intervals, i.e.\ supplied by the
prior rather than identified by evidence, and labeled as such. The
unregularized fit is the cautionary tale: it improves training alignment
by concentrating mass on four features and gives up most of the
stuffing response, which the gates veto.

\begin{table}[t]
\centering
\footnotesize
\setlength{\tabcolsep}{4.5pt}
\begin{tabular}{lccccl}
\toprule
Sub-component & $w$ (released) & LOFO Pearson & LOFO gates & Admissible
$w_i$ & Role \\
\midrule
quotable\_density & 0.124 & $\mathbf{0.688}$ & 5/5 & $[0.12, 0.16]$ &
alignment carrier \\
statistic\_density & 0.051 & $0.921$ & \textbf{4/5} (dose) & $[0.02, 0.05]$
& alignment + dose \\
citation\_f1 & 0.019 & $0.947$ & 5/5 & $[0, 0.02]$ & capped carrier \\
shannon\_entropy & 0.246 & $0.939$ & 5/5 & $[0.02, 0.50]$ & control mass \\
information\_density & 0.244 & $0.930$ & 5/5 & $[0.12, 0.50]$ & control mass \\
entity\_density & 0.094 & $0.945$ & 5/5 & $[0, 0.50]$ & diffuse support \\
semantic\_coherence & 0.084 & $0.950$ & 5/5 & $[0, 0.12]$ & bounded above \\
self\_containment & 0.073 & $0.949$ & 5/5 & $[0, 0.50]$ & weakly identified \\
mmr\_score & 0.036 & $0.948$ & 5/5 & $[0, 0.50]$ & weakly identified \\
ndcg\_score & 0.015 & $0.949$ & 5/5 & $[0, 0.50]$ & weakly identified \\
semantic\_redundancy & 0.015 & $0.949$ & 5/5 & $[0, 0.50]$ & weakly identified \\
\bottomrule
\end{tabular}
\caption{Per-weight identifiability on the corrected capture (reference
refit: held-out Pearson $0.949$, 5/5 gates). LOFO: held-out alignment and
gates after refitting with $w_i = 0$. Admissible $w_i$: outermost grid
values where the profile refit keeps Pearson $\ge 0.94$ and 5/5 gates
(grid resolution $0.02$--$0.05$; the interval is a per-coordinate slice
through the admissible set, not a product box). The released vector of
Table~\ref{tab:features} jointly passes all gates on this capture with
Pearson $0.970$.}
\label{tab:ident}
\end{table}

\begin{figure}[t]
\centering
\includegraphics[width=\linewidth]{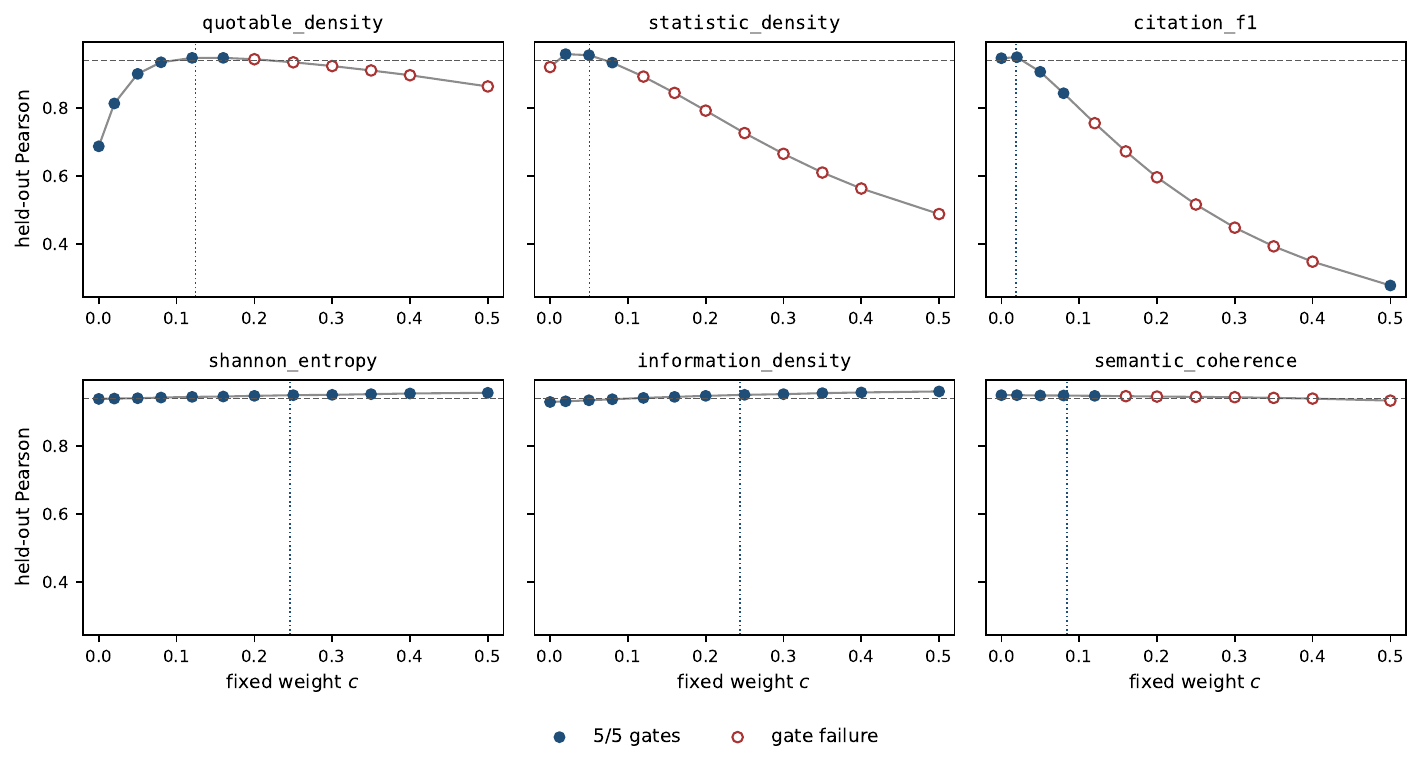}
\caption{Profile refits for six sub-components
(\texttt{identifiability\_report.json}, script
\texttt{geo\_paper\_fig\_profiles.py}): held-out anchor Pearson when the
feature's weight is fixed at $c$ and the remaining weights are refit.
Filled points pass all five gates; open red points fail at least one.
Dashed line: the $0.94$ split-stability floor; dotted vertical line: the
released weight. quotable\_density and statistic\_density are pinned by
the data; the entropy pair is flat above a lower bound; citation\_f1 is
capped from above.}
\label{fig:profiles}
\end{figure}

\section{The score as a manipulation filter: a gaming-detection benchmark}
\label{sec:gaming}

The properties the preceding sections establish, deterministic and
auditable computation, falsification gates, a provable single-lever
amplification bound (Lemma~\ref{lem:concave}), are the properties of a
\emph{content-manipulation filter} in the web-spam lineage
\citep{ntoulas2006spam, gyongyi2004trustrank}, not of a citation
predictor. This section evaluates that use directly, on a benchmark built
entirely from the released capture and scorer at zero marginal cost
(script \path{geo_paper_gaming_detection.py}, artifact
\path{gaming_detection_report.json}).

\paragraph{Benchmark.} The clean class is the 500 unmodified EN GEO-Bench
sources plus their length-matched neutral-filler variants at every
released dose: a usable filter must not flag benign volume growth. The
gamed class covers three attack families: \emph{degradations} (keyword
stuffing at dose 1 and 8, duplication), \emph{lever amplification}
(quotation, statistics, cite-sources stuffed at dose 8, i.e.\ an attacker
pushing the score's own calibrated levers), and four \emph{combined}
attacks generated with the released edit machinery (composition order
fixed; 2{,}000 additional deterministic scorings, cached in
\path{gaming_combo_capture.jsonl}). Detection is score-based with the
flagging threshold set at $5\%$ false-positive rate on the clean pool
only; separation is reported as AUC against the dose-matched neutral
control, so every comparison stays volume-controlled. Both the shipped weighting and the prior observational weighting are evaluated.

\begin{table}[t]
\centering
\footnotesize
\setlength{\tabcolsep}{4pt}
\begin{tabular}{lcccc}
\toprule
Attack & $\Delta$score (shipped) & AUC (shipped) & det@5\%FPR & AUC (prior) \\
\midrule
keyword stuffing, d1 & $-2.8$ & $0.620$ & $0.124$ & $0.771$ \\
keyword stuffing, d8 & $-13.0$ & $0.851$ & $0.498$ & $\mathbf{0.997}$ \\
duplication & $-2.9$ & $0.529$ & $0.114$ & $0.449$ \\
stuffing $+$ duplication, d8 & $-16.4$ & $\mathbf{0.944}$ & $0.788$ & --- \\
stuffing $+$ quotation, d8 & $-3.1$ & $0.569$ & $0.002$ & --- \\
\midrule
quotation, d8 & $+3.5$ & $0.208$ & $0.000$ & $0.664$ \\
statistics, d8 & $+0.1$ & $0.394$ & $0.000$ & $0.516$ \\
cite-sources, d8 & $+0.5$ & $0.372$ & $0.000$ & $0.279$ \\
quote$+$stats$+$cite, d8 & $+4.5$ & $0.147$ & $0.000$ & --- \\
\bottomrule
\end{tabular}
\caption{Gaming detection on 500 sources per class, in-distribution
attacks. AUC is against the
dose-matched neutral control (volume-controlled); the detection threshold
is fixed at $5\%$ FPR on the clean pool. Degradation attacks (top) are
separable at strong dose; amplification of the score's own levers
(bottom) is not detected and is mildly rewarded, within the bounded gain
quantified in the text. Out-of-distribution attacks and web-spam
baselines are in Table~\ref{tab:gaming2}. Prior-weighting AUCs for the
released single attacks; combo
variants are scored under both weightings in the artifact.}
\label{tab:gaming}
\end{table}

\begin{table}[t]
\centering
\footnotesize
\setlength{\tabcolsep}{4pt}
\begin{tabular}{lccc}
\toprule
Attack & score AUC & rep.-rate AUC & near-dup AUC \\
\midrule
keyword stuffing, d8 (ID) & $0.851$ & $\mathbf{1.000}$ & $0.988$ \\
duplication (ID) & $0.529$ & $0.670$ & $\mathbf{0.685}$ \\
link injection, heavy (OOD) & $0.468$ & $\mathbf{0.859}$ & $0.629$ \\
keyword interleave, heavy (OOD) & $0.440$ & $\mathbf{0.883}$ & $0.315$ \\
boilerplate padding, heavy (OOD) & $0.415$ & $0.521$ & $0.415$ \\
\bottomrule
\end{tabular}
\caption{Hardened benchmark: the score against two no-fitting web-spam
baselines, on in-distribution (ID) and out-of-distribution (OOD)
attacks. Baselines dominate detection wherever detection is possible;
the score detects none of the OOD attacks; boilerplate padding evades
everything tested. The score's distinctive property is the bounded
attacker gain of Result 1, not detection
(\texttt{gaming\_detection\_v2\_report.json}).}
\label{tab:gaming2}
\end{table}

\paragraph{Result 1 (the headline property): the attacker's gain is
bounded, dose-decreasing, and sub-additive.} An attacker amplifying the
score's own positive levers is not flagged, and gains score, but the
gain is capped: the best any single lever achieves at any dose is
$+6.1$ points (quotation at dose 1), \emph{falling} to $+3.5$ at dose 8;
statistics and cite-sources decay from $+4.6$ and $+3.9$ to $+0.1$ and
$+0.5$; the three-lever combination at dose 8 yields $+4.5$,
sub-additive (Figure~\ref{fig:gaming}, left). This is the empirical
counterpart of Lemma~\ref{lem:concave}, and it is a property the
shipped concave weighting owns: under the linear prior weighting the
dose curves are not uniformly capped in the same way (quotation at dose
8 goes negative while cite-sources retains $+2.6$), and no web-spam
baseline bounds anything, baselines only detect. A score that an
optimizer cannot inflate by more than a few points, while remaining
auditable line by line at zero cost, is the artifact's one distinctive
property, and it is both proved and measured. One operational
question the benchmark sharpens rather than leaves open. The leaderboard
shrinkage attenuates a page-level gain before it can move a ranking:
under Definition~2, inflating a single page of an $n$-page domain by
$\Delta$ raises the domain's shrunk score by exactly $\Delta/(k + n)$, at
most $1$ point at the $\Delta = 6$ cap (attained at $n = 1$, decaying as
$1/(k + n)$ thereafter), whereas inflating \emph{every} page by $\Delta$
raises it by $\Delta\, n/(k + n) < \Delta$, approaching the raw gain only
for large, fully-compromised domains ($\tilde{s}$ is affine in $\bar{s}$
with slope $n/(k + n)$, and one page shifts $\bar{s}$ by $\Delta/n$). What
a released snapshot cannot yet resolve is the last step, from a bounded
score change to a change in leaderboard \emph{position}, which depends on
how densely domains cluster near the attacked one; we leave that
measurement as future work because it requires a frozen production
leaderboard outside the released GEO-Bench artifacts.
% TO-PRODUCE (authors): the positional step only. Simulate the
% single-page and whole-domain inflation above through Definition-2
% shrinkage on a frozen (hashed) production leaderboard snapshot and
% report the rank-displacement distribution (median, p95, max; %% of
% domains gaining >=1 and >=5 positions), stratified by n. The score-space
% bound is now proved in-text; only the points->positions map needs the
% snapshot. Full spec in REVIEW_REPORT_v9.md.

\paragraph{Result 2: the division of labor: baselines detect, the
score bounds.} Within the calibration's own attack families, heavy
stuffing separates from matched neutral filler at AUC $0.851$ ($49.8\%$
detection at $5\%$ FPR) and stuffing-plus-duplication at $0.944$
(Table~\ref{tab:gaming}). Three attributions follow from the
hardened benchmark (\path{gaming_detection_v2_report.json}). First,
standard web-spam baselines in the \citet{ntoulas2006spam} lineage
dominate detection wherever detection is possible: a no-fitting
token-repetition-rate detector reaches AUC $1.000$ on heavy stuffing and
a shingle-based near-duplicate detector $0.988$ (the score: $0.851$);
on pure duplication the near-duplicate detector reaches $0.685$ against
the score's chance-level $0.529$. Second, the prior observational
weighting detects heavy stuffing better than the shipped one ($0.997$
versus $0.851$): detection strength is chiefly a feature-set property,
not a calibration achievement. Third, out-of-distribution attacks that
the calibration harness never produced (spam link injection, in-text
distributed keyword interleaving, SEO boilerplate padding, each
deterministic and released) are \emph{not detected by the score at
all}: AUC $0.41$ to $0.52$ against matched neutral filler, chance
level. The repetition-rate baseline catches two of the three (in-text
interleaving $0.88$, link injection $0.86$ heavy-dose); boilerplate
padding evades the score and both baselines ($\le 0.59$). Detection by
this score does not generalize beyond the perturbation families it was
gated against, and we state that as a limit rather than argue around
it.

\paragraph{Positioning: what the filter is, and what it is not.} The
deployable configuration that follows from the measurements is layered:
cheap standard baselines (repetition rate, near-duplicate shingling) for
degradation-spam \emph{detection}, and the calibrated score as the
\emph{bounded quality measure} that an optimizer cannot inflate by more
than a few points, for corpus filtering, editorial quality control, and
regression testing of content pipelines. The score's within-query
citation correlation (Section~\ref{sec:outcome}) is a secondary
validity signal for this use. The benchmark does not support calling
the score a manipulation \emph{detector}: its distinctive, proved
property is being hard to \emph{inflate}, not flagging every spam
family, and the two claims should not be traded for one another.

\begin{figure}[t]
\centering
\includegraphics[width=0.95\linewidth]{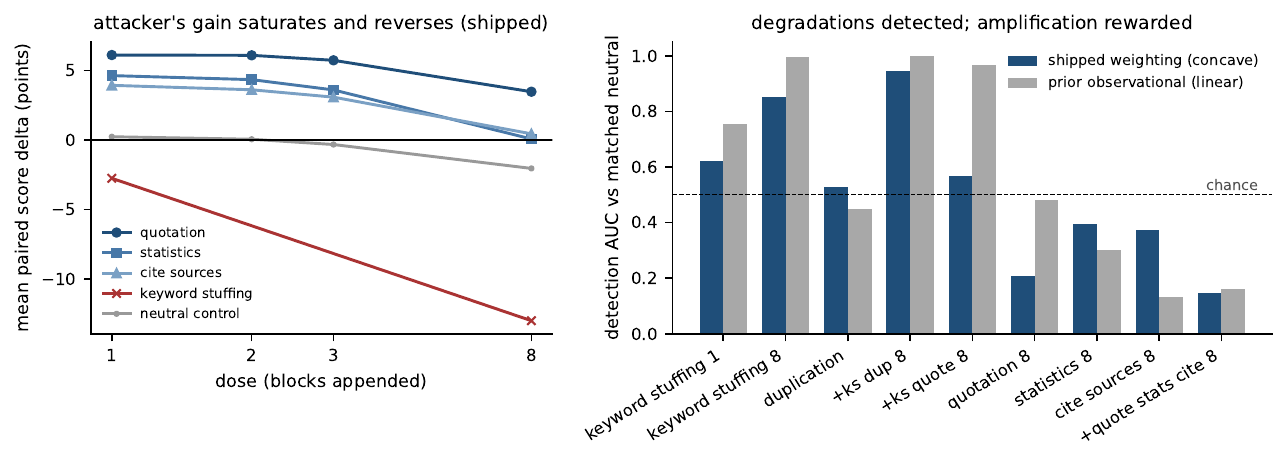}
\caption{Gaming-detection benchmark. Left: the attacker's mean paired
score gain by dose under the shipped weighting; positive levers saturate and decay while
degradations go strongly negative; the neutral control stays flat.
Right: detection AUC against matched neutral filler per attack, shipped
and prior weightings. Script \texttt{geo\_paper\_fig\_gaming.py}.}
\label{fig:gaming}
\end{figure}

\section{Outcome validation with modern answer engines}
\label{sec:outcome}

Construct validity certifies alignment with causal levers, not the visibility
outcome itself. We therefore close the loop with a paired, volume-controlled
outcome experiment against open-weights answer engines (temperature 0, fixed
seed, pinned model identifiers), at zero marginal API cost.

\paragraph{Protocol.} For each GEO-Bench query with its five sources, the
engine answers with inline per-sentence citations under three conditions:
unmodified sources; the official target source with the deterministic
quotation edit; and the same target with length-matched neutral filler
(volume control in outcome space). Visibility is a PAWC-like share: the
position-weighted word count of sentences citing each source, normalized over
sources. The primary estimand is the paired difference
$\Delta\pawc_{\mathrm{quote}} - \Delta\pawc_{\mathrm{neutral}}$ on the target
source (mean, bootstrap CI, exact sign test). Secondarily, we report the
within-query rank correlation and the pairwise concordance between each
source's content score and its baseline PAWC share, for both the production
and the selected weighting.

{\sloppy
We run this across seven engine families: six open-weights
(\texttt{mistral-medium-3.5}, \texttt{gemma-4-31b}, \texttt{nex-n2-pro},
\texttt{gpt-oss-120b}, \texttt{qwen3.5-122b}, \texttt{llama-3.3-70b}) and
one closed-weights commercial model (\texttt{gemini-3.1-flash-lite}):
821 (engine, query) outcome records over 229 unique queries, of which 819
form complete five-source ranking groups (Section~\ref{sec:ltr}) and 777
have non-degenerate within-query variance for the correlation
analysis. A July-2026 replication on three additional proprietary
engines (\texttt{gpt-5.5}, \texttt{gpt-5.4}, \texttt{gpt-5.4-mini};
$n = 150$ queries each) is reported at the end of this section and kept
out of the pooled numbers above, which predate it.\par}

\paragraph{Observational: the score predicts citation (robust).}
% NUMBERS-FROM: outcome_stats_report.json (geo_paper_outcome_stats.py over outcome_runs_*.jsonl)
The score's pooled within-query Spearman against citation is
$0.114$\footnote{Three within-query Spearman values for the fixed score
recur and are not interchangeable: $0.114$ pools the seven original arms
with non-degenerate variance ($n = 777$, this section); $0.119$ is the
ten-arm ranking-study value ($n = 1{,}269$ groups, Table~\ref{tab:ltr};
harness value $0.1188$); $0.122$ is the same quantity under the baseline
script's group filtering. We cite $n$ with each occurrence.}
($n = 777$, $t = 6.12$, $p = 1.5\times10^{-9}$), individually
significant on four of seven original arms, near zero on gpt-oss
($0.016$); per-arm values are released
(\path{outcome_stats_report.json}). The prior observational weighting
pools slightly higher ($0.137$), a point the re-weighting note below
owns. The effect is modest, as expected for ranking five already
topically-relevant sources from text alone, but it rejects the
no-signal null decisively, and it is a \emph{secondary} validity signal
for the filter positioning of Section~\ref{sec:gaming}, not a headline.

\paragraph{Error analysis: where the score fails.} The failures are not
random; they concentrate where the score cannot separate the candidates.
Grouping the 1{,}213 correlation-eligible records (ten arms, gpt-5.x
replication included) by the within-query standard
deviation of the five candidates' scores, the mean within-query
Spearman rises monotonically from $0.066$ in the most compressed quartile
to $0.240$ in the most dispersed (artifact
\path{ltr_strict_report.json}); in the 92 groups where the five
candidates take three or fewer distinct score values, the mean correlation
is $0.01$, indistinguishable from chance, against $0.13$ elsewhere: the
ceiling saturation of Appendix~\ref{app:diagnostics} degrades
within-query ordering into tie-breaking noise. Two disciplined attempts
to act on this are released as negative results. A dispersion-based
confidence flag failed held-out confirmation twice (inversion at median
coverage; non-significant at quartile coverage in a disclosed second
attempt): dispersion is a descriptive diagnostic, not a calibrated
confidence signal (\path{dispersion_confidence_report.json}). Percentile
renormalization of the saturated features removes the ceiling and
preserves alignment ($0.951$) but breaks the dose-response gate and
slightly lowers the outcome correlation ($0.106$ vs $0.115$); per the
pre-registered rule the features stay
(\path{hybrid_renorm_report.json}).

\paragraph{Freshness-blend ablation.} Re-scoring every outcome record
with and without the $8\%$ freshness blend (recomputed scores match the
released records to $0.005$), the blended score pools at $0.132$ against
$0.114$ for the content score alone, a paired within-query difference of
$+0.018$ ($t = 3.28$, $p = 0.001$), positive on six of the seven
original arms (\path{freshness_ablation_report.json}). The blend is retained on
that observational evidence and on continuity with the deployed
configuration; its causal status is untested, no anchor targets
recency, and excluding it from the shipped score is a defensible
alternative we did not take.

Arm-inclusion criterion: the analyzed arms are those with complete
three-condition coverage at target scale. Three abandoned partial arms,
a 13-record pilot, and three credit-interrupted Claude arms (47/41/13
queries) are released but excluded by the same rule; none is excluded
for its result.

\paragraph{Interventional: the 2023 GEO toolbox does not transfer.}
This is the paper's central outcome result, and it is a different kind of
claim from the correlation above: the observational paragraph asks whether
the score \emph{tracks} citation; this paragraph asks whether the published
causal \emph{levers} still move it. Table~\ref{tab:transfer} summarizes.
We test the deterministic realizations of the three interventions the GEO
paper reports as most effective: Quotation, Statistics, and Cite-Sources
($+42.6$, $+32.8$, $+27.7$ published PAWC gain).

\begin{table}[t]
\centering
\footnotesize
\setlength{\tabcolsep}{4pt}
\begin{tabular}{lcccl}
\toprule
Intervention & Published gain & Measured effect ($n = 605$) &
gpt-5.x repl.\ ($n = 450$) & Conclusion \\
\midrule
Quotation & $+42.6\%$ & $-0.0009$ ($p = 0.82$) &
$-0.0106$ ($p = 0.04$ unc.) & does not transfer \\
Statistics & $+32.8\%$ & $-0.0002$ ($p = 0.96$) &
$-0.0066$ ($p = 0.25$) & does not transfer \\
Cite-Sources & $+27.7\%$ & $-0.0101$ ($p = 0.04$ unc., $0.13$ Bonf.) &
$-0.0054$ ($p = 0.34$) & does not transfer \\
\bottomrule
\end{tabular}
\caption{Causal transfer of the three strongest 2023-era GEO levers to
modern engines: published PAWC gains versus the paired, volume-controlled
effect measured across six engine families plus the closed commercial
model ($n = 605$), and its July-2026 replication on three proprietary
gpt-5.x engines ($n = 450$). None moves citation in the published
direction; nominally significant effects are \emph{negative}, and no
replication-column effect survives Bonferroni correction over the three
pooled lever tests. The measured columns are PAWC \emph{share}
differences (a fraction); the self-measured modern-anchor paragraph below
rescales the same quantity to percentage points ($\times 100$) over a
larger arm set ($n = 1{,}087$ to $1{,}531$), so e.g.\ cite-sources
$-0.0101$ here and $-0.79$ percentage points there are the same effect on
different subsets, not a discrepancy.}
\label{tab:transfer}
\end{table}

The paired quotation-vs-neutral effect is null on six of the seven
original engines and nominally negative on gpt-oss ($-0.055$,
$p = 0.013$ uncorrected, n.s.\ after Bonferroni). On the six-engine
subset with full three-lever coverage ($n = 605$), none of the three
levers moves citation in the published direction (Table~\ref{tab:transfer});
the only nominally significant pooled effect (cite-sources $-0.0101$,
$p = 0.04$ uncorrected) is \emph{negative}. An instructive scale detail:
an interim $n = 41$ arm showed a suggestive positive quotation effect
($+0.027$) that washed out to $+0.005$ by $n = 100$. We therefore do not
claim any strong 2023-era lever causally moves citation on a modern
engine, consistent with \citet{puerto2025cseo}. Two scope notes bound
the claim itself: the test regime (five curated sources, a fixed
citation-format prompt, a PAWC-like share) is a controlled proxy, so
part of the non-transfer could in principle be an artifact of the
simplified frame rather than of the engines; and the constraint-set
validity of Sections~\ref{sec:method} and~\ref{sec:gaming} stands on
its own and does not depend on this causal claim.

\paragraph{Replication on proprietary frontier engines (July 2026).}
% NUMBERS-FROM: outcome_multi_gpt55/gpt54/gpt54mini.jsonl
Against the residual objection that open-weights and budget commercial
engines might cite differently from the frontier, the full three-lever
protocol was rerun on \texttt{gpt-5.5}, \texttt{gpt-5.4}, and
\texttt{gpt-5.4-mini}. These arms are single-run by design: the
provider exposes neither temperature nor seed, so
replication-by-reseeding is not available to anyone; we buy statistical
power through scale instead ($n = 150$ paired queries per arm, 450 in
the pooled test, reasoning disabled) and release every raw record,
which makes the arms exactly reproducible in the only sense the
platform permits, from the released data. Both halves replicate: no lever moves citation in the
published direction (pooled quotation $-0.0106$, nominally negative
like the gemini arm; statistics $-0.0066$; cite-sources $-0.0054$;
Table~\ref{tab:transfer}), and the score's pooled Spearman on the three
arms is $0.118$ ($n = 450$, $p = 1.0\times10^{-6}$), an out-of-family
replication of the $0.114$ headline (per arm $0.080/0.118/0.157$).
Raw records and per-arm reports are released.

\paragraph{Self-measured modern anchors: closing the loop the paper
promised.} The transfer result raises the question this paper previously
deferred to future work: what does a causal anchor set measured on
\emph{modern} engines look like? The released records answer it at zero
additional cost. Pooling the paired, volume-controlled effect of each
deterministically realizable lever across every complete arm gives, in
PAWC percentage points: quotation $-0.33$ (95\% CI $[-0.84, +0.17]$,
$n = 1{,}531$), statistics $-0.28$ ($[-0.97, +0.43]$, $n = 1{,}087$),
cite-sources $-0.79$ ($[-1.53, -0.14]$, $n = 1{,}087$), against published
2023 values of $+42.6$, $+32.8$, $+27.7$. Two of the three intervals
cover zero and the only one that excludes it is \emph{negative}: there is
no positive modern anchor set to recalibrate toward
(\path{modern_anchors_report.json}).

We nonetheless ran the recalibration as a demonstration, as a clearly
labeled additional arm (the modern-anchor recalibration, released in
\path{weights_v6_modern.json}, not shipped): identical pipeline (sqrt
family, ridge to the pinned $w_0$, nested $\lambda$, gates on train,
held-out confirmation), anchor vector replaced by the self-measured one.
The outcome is instructive in exactly the way a degenerate fit should be:
$\lambda$ selects the maximum of the grid (the most regularized option),
held-out gates drop to 4/5, and the exact permutation floor at three
anchors ($1/3! = 0.167$) makes every alignment claim untestable. The
unregularized variant is starker: it zeroes precisely the lever-carrying
features (quotable density, statistic density, citation F1, entity
density), i.e.\ a modern-anchored calibration guts the score of the
components that respond to interventions at all
(\path{identifiability_report_v6.json}). The honest conclusion, which
reframes Section~\ref{sec:method} rather than undermining it: what
calibration buys on today's engines is not anchor recency, it is the
response-surface discipline the gates enforce (bounded amplification,
negative control, dose sanity), and those properties are exactly the
ones the gaming benchmark of Section~\ref{sec:gaming} shows surviving.
The shipped weighting, for completeness, aligns with the modern vector at $0.744$
under the same untestable floor, and keeps 5/5 gates.

\paragraph{What this section establishes, and what it cannot.} The
query-agnostic score carries real but bounded outcome signal. This is not a
deficiency to be tuned away: a score that never sees the query
\emph{cannot}, even in principle, resolve which of five topically relevant
sources answers a particular question. Having quantified exactly how far
content-only signal goes, we next quantify how much of the remaining gap is
closed by modeling the query explicitly.

\section{The query-conditioned ceiling, in brief}
\label{sec:ltr}

This section exists to locate the ceiling that forces the paper's
positioning; the full study (feature stack, baselines, dependency
ablations, leave-one-engine-out transfer, per-tag breakdowns) is
Appendix~\ref{app:ltrfull}. Everything uses cross-validation folds drawn
over unique queries. The 229 unique queries are a property of
GEO-Bench's test split, not a sampling choice of ours; we treat the
count as a design constraint, which is why the paper claims no
fitted-model superiority at this scale, draws every fold over unique
queries, and reports multiseed spreads rather than single-seed points.
Three facts carry the argument. First, the ceiling
is real and the score sits at it: a LambdaMART free to combine the same
content features nonlinearly reaches held-out within-query Spearman
$0.104$ ($0.088 \pm 0.029$ across seeds) against $0.119$ for the fixed
score; ridge reaches $0.075$. No content-only model we can fit beats the
fixed, auditable score, the empirical counterpart of
Proposition~\ref{prop:ceiling}. Second, the query recovers most of the
rest: the strongest single query-conditioned signal is an open-weights
cross-encoder at $0.388$ with no fitting, which no fitted model reliably
beats at 229 unique queries (full model $0.348 \pm 0.013$); conditioning
roughly triples the measured signal (Figure~\ref{fig:signal}). Third,
the honest caveat that makes the first two credible: our first version
of this evaluation drew folds over (engine, query) records, and because
150 of 229 queries recur across engine arms, a query-blind probe reached
$0.48$, four times the fixed score, by memorizing per-query outcomes
rather than learning relevance; under query-disjoint folds the same
probe collapses to $0.065$ on the audited seven-arm dataset ($0.104$ on
the ten-arm extension the body reports). The leak inflated qualitative conclusions, not just
magnitudes, and Appendix~\ref{app:leakage} tabulates both regimes. A
dispersion-based per-comparison confidence flag failed held-out
confirmation twice and is released as a null
(\path{dispersion_confidence_report.json}).

\begin{figure}[t]
\centering
\includegraphics[width=0.92\linewidth]{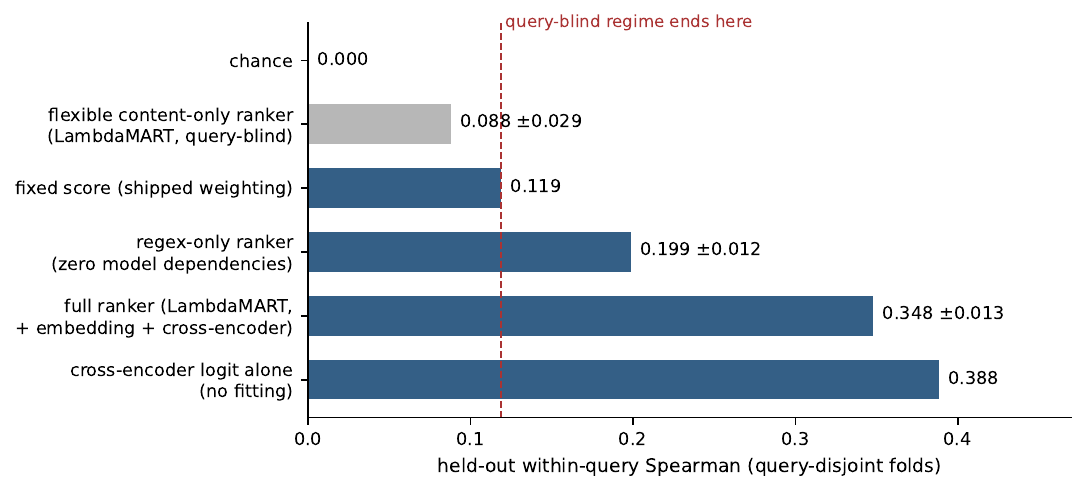}
\caption{Where the citation signal lives, all under query-disjoint
evaluation (fitted models: five-seed means; script
\texttt{geo\_paper\_fig\_signal.py}). The dashed line marks the best
query-blind value: everything to its right requires the query. The
flexible content-only ranker does not reach the fixed score, the
empirical counterpart of Proposition~\ref{prop:ceiling}.}
\label{fig:signal}
\end{figure}

\section{Threats to validity}
\label{sec:threats}

\paragraph{Anchors and calibration.} Five deterministic anchors cap the
alignment statistics (exact permutation floor $1/120$); Pearson with
bootstrap CIs is the primary metric and no ordering claim is made where
CIs overlap. Calibrating and gating on the same interventions risks
teaching to the test, mitigated by nested selection, ridge anchoring,
held-out splits, adversarial variants never seen by the fit, and a
transfer corpus; the aggressive fit is rejected precisely because it
wins alignment by sacrificing these. The strongest guard against
teaching-to-the-test is that the property we headline is not itself a
test score: the bounded-amplification result is Lemma~\ref{lem:concave}
instantiated, true for the entire concave family independently of any
gate constant, and the gaming benchmark then stresses it on attack
families the calibration never produced (Section~\ref{sec:gaming}). The residual value of anchors whose
levers measure at zero on modern engines is stated in
Section~\ref{sec:method}: the gates define the score, the anchors are
the only published causal check, and the modern-anchor measurement
closes the question empirically. Our deterministic edits approximate the
original LLM-generated ones; three published interventions requiring
rewriting are out of scope, and one of our own edits contained a bug
found and disclosed during the audit (Section~\ref{sec:ab}); three
harness bugs were found in total, and the honest inference from finding
three is a residual risk of undiscovered ones. The 163-assertion
consistency audit bounds one failure mode only, divergence between a
number printed in the paper and its released artifact; it does not and
cannot certify the semantic correctness of the harness that produced
the artifact.

\paragraph{Features and corpus.} Entropy saturates at 100 on 84\% of
base sources, so absolute levels carry strictly weaker support than
responses to edits (the renormalization trade-off and the failed
confidence flag are reported in Section~\ref{sec:outcome}); the entropy
pair double-counts one construct (a merge refit shows the redundancy is
immaterial); the three block-structural features are degenerate on
plain-text GEO-Bench and their weights unidentified; tokenization is
ASCII-only and rarity queries an English word list, so the French gate
tests the aggregation under degraded features, not language fairness.

\paragraph{Outcome scale and engines.} Ten engine families under a
controlled five-source, fixed-prompt, PAWC-share protocol are a proxy,
not open-web competition with retrieval and deduplication
\citep{puerto2025cseo}; hosted snapshots may drift (records released);
the gpt-5.x replication arms are single-run by design (no
temperature or seed exposed; power bought through scale, records
released); and the gaming benchmark's attack
surface, while extended beyond the calibration families, cannot be
exhaustive: attacks outside the released families are untested, and the
score detects none of the out-of-distribution families tested.

\paragraph{Ranking study.} 229 unique queries make the fitted models
small-sample; we claim no superiority over the best single relevance
signal; residual dataset artifacts remain possible despite the three
removed leaks; the freshness increment is observational, not causal.

\section{Reproducibility}
\label{sec:repro}

\sloppy
The reproducibility package (data, analysis code, \texttt{reproduce.sh}
and the consistency audit described below) is public at
\url{https://github.com/TW3-Partners-OS/geo-score-reproducibility}.

Two regimes apply. \emph{Exact}: every reported number reproduces
offline from released artifacts alone (per-query outcome records,
feature captures and caches, frozen weight vectors, result JSONs), with
no engine access. \emph{Fresh re-collection} from hosted engines is
possible at zero marginal cost but snapshots may drift; the released
records insulate the analyses from this. The release contains all
harness, analysis, and figure scripts named throughout the paper; the
pinned weight files, including the prior observational weighting used as
the ridge anchor (the live production weights were promoted after the
calibration runs, so every refit pins $w_0$ explicitly, a failure mode
we hit ourselves); exact hosted model identifiers with pinned decoding
(the gpt-5.x arms are single-run, disclosed in
Section~\ref{sec:outcome}); pinned library versions and fixed seeds for
every split, bootstrap, and perturbation; and a one-command
\texttt{reproduce.sh} that regenerates every offline artifact and runs a
163-assertion consistency audit checking each headline number against
its artifact. Scoring runs on commodity CPU (roughly 70~ms to 0.4~s per
page); the full strategy-study capture is about 35{,}000 scorings, a few
hours single-process, and every downstream analysis re-runs from the
released captures in minutes.

\section{Conclusion}

This paper does not propose to replace answer-side GEO evaluation; live
engines remain the causal ground truth. Its contribution is the
validation protocol, and the protocol's demonstrated outputs are the
evidence that it works: a deterministic, auditable content score whose
one distinctive property is the bounded attacker (at most $+6$ points
from amplifying any calibrated lever, decreasing with dose, sub-additive
across levers, the measured footprint of the concavity bound), all
reproducible offline at zero marginal cost. The contribution is deliberately scoped: detection of spam-style
degradation belongs to standard baselines, which the benchmark
quantifies (they dominate it, and out-of-distribution attacks evade the
score), while the score contributes the property no baseline has, the
bounded, auditable quality measure; the deployable filter layers the
two. The right deployments
follow: corpus filtering, editorial quality control, regression
testing of content pipelines, leaderboards audited under edits; not
per-query citation prediction, which the ceiling result shows no
query-agnostic score can deliver. The claims are established by
measurement standards rather than predictor standards: calibration
against causal effect sizes, falsification gates selected on training
data and confirmed held-out, per-weight identifiability in place of
per-weight optimality rhetoric, and a query-conditioned bound on what
the proxy can never resolve. Three findings deserve emphasis. First, the
2023 GEO toolbox is dead on modern engines and we measured its
replacement: the three strongest published levers moved citation on none
of ten engine families, the self-measured modern anchor vector is near
zero with one lever nominally negative, and a recalibration against it
(the released modern-anchor recalibration) strips the score of its
lever-responsive components, which settles what calibration buys today:
the gate-enforced response surface, not anchor recency. Second, no
content-only model we could fit beats the fixed score on unseen queries,
so its modest outcome correlation (Spearman $0.11$, replicated out of
family on proprietary gpt-5.x arms) reflects the ceiling that binds its
function class (Proposition~\ref{prop:ceiling}), not an engineering
shortfall, while query conditioning roughly triples the measured signal.
Third, negative results and self-audits are load-bearing: the calibrated
weighting does not beat its predecessor on outcomes, an early suggestive
intervention effect vanished when scaled, the falsification of an
alternative aggregation family rested partly on a bug in our own
negative-control edit, our first ranking evaluation leaked query
information across engine arms, and a dispersion-based confidence flag
we designed failed its held-out confirmation; all are reported and
corrected rather than absorbed. The natural next steps are anchor
measurement for levers the current literature has not tested
(structural and freshness interventions), outcome validation in real
web settings with retrieval and competition, and language-aware
tokenization for cross-lingual deployment. Beyond GEO, the protocol,
anchor to causal evidence, gate adversarially, bound with a conditioned
skyline, re-measure the anchors and report what you find, applies
wherever a deterministic proxy must stand in for an expensive generative
oracle.

\bibliographystyle{plainnat}
\bibliography{refs}

\appendix

\section{Formal specification of the eleven sub-components}
\label{app:features}

This appendix specifies each sub-component exactly as implemented in the
released scorer (\path{domain/geo/}), in the \texttt{regex\_only} code path
used throughout the paper. Notation: $x$ is the page text,
$\mathrm{clamp}(u; a, b) = \min(b, \max(a, u))$, and every feature returns a
value in $[0, 100]$ (rounding to one or two decimals as noted). Where a
required quantity is unavailable the feature returns the documented
degenerate value rather than failing; Table~\ref{tab:degenerate} collects
these. Two naming caveats are stated up front because the artifact names are
kept for compatibility with released data: \texttt{citation\_f1} is a
weighted \emph{arithmetic} mean, not a harmonic F1
(Appendix~\ref{app:citation}), and \texttt{ndcg\_score} computes the
classical DCG ratio only as a stored reference value, not as part of the
score (Appendix~\ref{app:ndcg}).

\subsection{Tokenization conventions}
\label{app:tok}

Unless stated otherwise, the information-theoretic features tokenize with
the ASCII pattern \verb|\b[a-zA-Z]+\b| on lower-cased text, keep tokens of
length $\ge 3$, and remove a fixed bilingual (EN+FR) stop list of about 180
function words. Accented characters terminate a token under this pattern;
the cross-lingual consequences are discussed in
Section~\ref{sec:threats}. Sentence segmentation, where needed, splits on
\verb|[.!?]+| and keeps segments longer than 10 characters.

\subsection{\texttt{shannon\_entropy} ($w = 0.246$)}
\label{app:shannon}

Let $V$ be the $\min(|\mathcal{V}|, 100)$ most frequent tokens after the
filtering of Appendix~\ref{app:tok}, with counts $f_1, \dots, f_V$ smoothed
by $\alpha = 10^{-10}$:
\begin{equation}
p_i = \frac{f_i + \alpha}{\sum_{j=1}^{V} (f_j + \alpha)},
\qquad
H = -\sum_{i=1}^{V} p_i \log_2 p_i .
\end{equation}
$H$ is normalized by its maximum $\log_2 V$ and shifted by a small
vocabulary bonus:
\begin{equation}
f_{\mathrm{shannon}}(x) = 10 \cdot
\mathrm{clamp}\!\left(
\min\!\left(10,\;
\frac{H}{\log_2 V} \cdot 10
+ \min\!\left(1.5,\; \tfrac{1}{3}\log_{10} \max(1, V)\right)
\right);\, 0,\, 100/10\right).
\end{equation}
Degenerate cases: fewer than 10 tokens, or an undefined $H$, return $50$;
$V < 5$ returns $30$; fewer than 10 words in the whole text returns $30$.

\paragraph{Worked example (computed by the released calculator).} A short
technical paragraph about transformer inference (three sentences, 50 words)
yields $H = 5.414$ bits over $V = 43$ vocabulary types, so
$H / \log_2 V = 0.998$ and the feature saturates at $100$. A deliberately
repetitive marketing paragraph (``Our platform helps your business grow.
Our platform gives your business tools. \dots'') yields $H = 3.384$ bits
over $V = 13$ types and scores $95.2$: normalizing by $\log_2 V$ measures
the \emph{flatness} of the frequency distribution over the surviving
vocabulary, not vocabulary size itself, so even repetitive prose scores
high in absolute terms. This ceiling behavior is quantified on GEO-Bench in
Appendix~\ref{app:diagnostics} (84\% of sources sit exactly at 100) and its
implications are discussed in Section~\ref{sec:threats}. The feature's
discriminative work in this paper happens under \emph{edits}: appending
repeated keywords concentrates the frequency distribution, drops $H$
sharply, and drives the negative response to the keyword-stuffing control.

\subsection{\texttt{information\_density} ($w = 0.244$)}
\label{app:infodensity}

A composite with internal weights $14 : 4 : 2$ (normalized):
\begin{equation}
f_{\mathrm{ID}}(x) = 0.70\, f_{\mathrm{shannon}}(x)
+ 0.20\, S_{\mathrm{lex}}(x) + 0.10\, S_{\mathrm{sem}}(x).
\end{equation}
Because its dominant term is exactly $f_{\mathrm{shannon}}$, the pair
(\texttt{shannon\_entropy}, \texttt{information\_density}) shares
$0.246 + 0.70 \times 0.244 \approx 0.417$ of the total pre-transform weight
and is correlated $0.94$ (Pearson) on GEO-Bench
(Appendix~\ref{app:diagnostics}). We keep both because they are separate
released artifacts and because the calibration operates on the aggregate,
but we disclose the double counting explicitly; a reviewer treating the
eleven weights as measuring eleven independent constructs would be misled.

$S_{\mathrm{lex}}$ (lexical diversity) combines five bounded terms,
\begin{equation}
S_{\mathrm{lex}} = 100\big(
0.35\,\min(1, \mathrm{MTLD}/120)
+ 0.20\, C_{\mathrm{Herdan}}
+ 0.15\, M
+ 0.15\, v_{\mathrm{sent}}
+ 0.15\, r_{\mathrm{reg}}\big),
\end{equation}
where $\mathrm{MTLD}$ is the bidirectional Measure of Textual Lexical
Diversity at TTR threshold $0.72$, $C_{\mathrm{Herdan}} = \ln V / \ln N$
for $N$ tokens and $V$ types, $M = \max(0, 1 - \min(1, 5 a^2))$ with
$a^2 = (\ln N - \ln V)/(\ln N)^2$ (an inverted Maas index),
$v_{\mathrm{sent}} = \mathrm{clamp}(1 - |CV - 0.5| / 0.5;\, 0, 1)$ for the
coefficient of variation $CV$ of sentence lengths, and $r_{\mathrm{reg}}$
is a register-consistency ratio over two fixed marker lists.

$S_{\mathrm{sem}}$ (vocabulary sophistication) combines mean word rarity
under \texttt{wordfreq} Zipf frequencies,
$r = \max(0, (7 - \mathrm{zipf}) / 7)$, mean syllable count, the proportion
of tokens of length $\ge 8$, alignment with RAKE key concepts extracted
from the title, and a technical-affix density; if the title yields no
concepts, $S_{\mathrm{sem}} = 50$ exactly. The Zipf lookup always queries
the English word list, a known limitation for French scoring
(Section~\ref{sec:threats}).

\subsection{\texttt{quotable\_density} ($w = 0.124$)}
\label{app:quotable}

Sentences are segmented as in Appendix~\ref{app:tok}, giving $n_s$. A
sentence is \emph{quotable} if it matches any of 13 English or 19 French
case-sensitive patterns covering (i) definitional constructions (``X is
defined as'', ``X d\'esigne'', colon definitions), (ii) enumeration leads
(``There are $n$ main\dots''), and (iii) attributed quoted speech
(a 20--200 character quotation followed or preceded by a declarative verb
and an attribution, in both quote conventions), the family added by the
French audit of Section~\ref{sec:frfix}. Matches are deduplicated and
capped at $q \le 10$; with $r = q / n_s$ and target density $10\%$,
\begin{equation}
f_{\mathrm{quot}}(x) =
\begin{cases}
100 & r \ge 0.10,\\
50 + 1000\,(r - 0.05) & 0.05 \le r < 0.10,\\
1000\, r & r < 0.05.
\end{cases}
\end{equation}
Fewer than 3 sentences returns $0$. The cap $q \le 10$ implies a
structural ceiling $r \le 10 / n_s$ for texts with more than 100
sentences: very long pages cannot reach the target density regardless of
content, a deliberate anti-stuffing trade-off that also bounds the reward
for legitimate long-form quotation.

\subsection{\texttt{entity\_density} ($w = 0.094$)}
\label{app:entity}

With $W$ the whitespace-word count (fewer than 50 words returns $0$), $E$
the number of \emph{unique} matches of case-sensitive entity patterns
(capitalized multi-word names, acronyms, dates, currency amounts, large
numbers, version strings, URLs, and French regulatory entities), $F$ the
number of factual-claim attribution matches, and $e = 100 E / W$:
\begin{equation}
b(e) =
\begin{cases}
100 & e \ge 3.5,\\
65 + 23.3\,(e - 2) & 2 \le e < 3.5,\\
35 + 30\,(e - 1) & 1 \le e < 2,\\
35\, e & e < 1,
\end{cases}
\qquad
f_{\mathrm{ent}} = \min\!\big(100,\; b(e) + \min(15, 2F) + B_{\mathrm{len}}\big),
\end{equation}
where $B_{\mathrm{len}} = \min(15, 1.5\,(E - 20))$ if $W \ge 800$ and
$E \ge 20$, else $0$.

\subsection{\texttt{semantic\_coherence} ($w = 0.084$)}
\label{app:coherence}

A composite with internal weights $1 : 4 : 12$ (normalized by 17):
\begin{equation}
f_{\mathrm{coh}}(x) = \tfrac{1}{17} T + \tfrac{4}{17} K + \tfrac{12}{17} F_{\mathrm{flow}}.
\end{equation}

\paragraph{$T$: title--content Jensen--Shannon term.} Let $P$ and $Q$ be
the smoothed unigram distributions of the title and content over their
combined top-50 vocabulary (tokenization of Appendix~\ref{app:tok}). The
implementation computes the Jensen--Shannon \emph{distance}
$d = \mathrm{JSD}^{1/2}(P \| Q)$ (base 2, via
\texttt{scipy.spatial.distance.jensenshannon}), where
\begin{equation}
\mathrm{JSD}(P \| Q) = \tfrac{1}{2} D_{\mathrm{KL}}\!\big(P \,\big\|\, \tfrac{P+Q}{2}\big)
+ \tfrac{1}{2} D_{\mathrm{KL}}\!\big(Q \,\big\|\, \tfrac{P+Q}{2}\big),
\qquad
D_{\mathrm{KL}}(A \| B) = \sum_i a_i \log_2 \frac{a_i}{b_i},
\end{equation}
and maps it through
$T_0 = \mathrm{clamp}(10 \min(10, (1 - d) \cdot 10 + \beta_V);\, 0, 100)$
with a vocabulary bonus $\beta_V = \min(2, \tfrac{1}{2}\log_{10}\max(1, V_c))$,
plus a length bonus of up to $+35$ for contents over 400 words.

\paragraph{Worked example (computed by the released calculator).} For the
title ``How transformer inference works'' against the matched technical
paragraph of Appendix~\ref{app:shannon}, $d = 0.969$ and $T_0 = 11.4$; against
a deliberately unrelated real-estate paragraph, $d = 1.0$ and $T_0 = 7.1$.
Because a 4-token title and a 40-token paragraph have nearly disjoint
unigram mass, $d$ is close to 1 even for on-topic pairs: the term
discriminates weakly and carries only $1/17$ of the composite, which is
itself why the composite's empirical variance is dominated by
$F_{\mathrm{flow}}$. We report this rather than describe the
feature as a semantic-similarity measure.

\paragraph{$K$: keyword alignment.} RAKE concepts from the title are
matched against the content by a weighted combination of TF-IDF cosine
(scikit-learn \texttt{TfidfVectorizer}, $n$-grams $(1,2)$), concept
coverage, occurrence density with optimum $3\%$, and positional proximity,
plus a length bonus; no title concepts returns $K = 50$.

\paragraph{$F_{\mathrm{flow}}$: discourse flow.} Five bounded terms
(adjacent-sentence lexical overlap, transition-marker density with optimum
$0.3$ per sentence, entity continuity over a 3-sentence window, paragraph
structure, sentence-length variety). One implementation detail matters for
interpretation: the semantic agent receives the markdown flattened to a
single line, so the paragraph term is the constant $0.4$ for any normal
article ($0.7$ for very short ones). This constant offset does not affect
rankings and is invisible to the intervention deltas (it cancels in
Eq.~\eqref{eq:delta}), but absolute values of $f_{\mathrm{coh}}$ should not
be read as a calibrated coherence measurement; the empirical distribution
(mean $52.0$, sd $4.3$) reflects it.

\subsection{\texttt{self\_containment} ($w = 0.073$)}
\label{app:selfcont}

Paragraphs (split on blank lines, $\ge 30$ characters, excluding headings,
code blocks, and list items) are scored by their opening: $20$ if the first
sentence matches a context-dependent opener (pronoun leads such as ``It
is'', ``This approach'', ``Cela permet''), else $100$ if it matches an
explicit-subject pattern (capitalized subject with verb, leading acronym,
or leading statistic), else $40$ if shorter than 20 words, $65$ if longer
than 200 words, else $82$. The feature is the mean over paragraphs; no
analyzable paragraph returns $20$. On GEO-Bench, sources are mostly single
plain-text passages, so the feature takes only 4 distinct values
(Appendix~\ref{app:diagnostics}).

\subsection{\texttt{statistic\_density} ($w = 0.051$)}
\label{app:statdensity}

$W$ as above (fewer than 50 words returns $0$); $S$ counts unique matches
of 9 English or 20 French numeric-evidence patterns (percentages, currency
with magnitude words, multiplicative comparisons, ratios such as ``$a$ out
of $b$'', hedged quantities); $s = 100 S / W$:
\begin{equation}
b(s) =
\begin{cases}
100 & s \ge 1.5,\\
50 + 71.4\,(s - 0.8) & 0.8 \le s < 1.5,\\
62.5\, s & s < 0.8,
\end{cases}
\qquad
f_{\mathrm{stat}} = \min\!\big(100,\; b(s) + \min(25, 10 D)\big),
\end{equation}
where $D$ counts dated statistics (a number and a year in the same
sentence or parenthesis).

\subsection{\texttt{mmr\_score} ($w = 0.036$)}
\label{app:mmr}

The text is segmented into blocks at headings and blank lines; blocks with
$\ge 30$ words are \emph{valid}. With fewer than 2 valid blocks the
composite returns the constant $20$, which is the dominant case on
GEO-Bench (plain-text sources without markdown structure; see
Appendix~\ref{app:diagnostics}). Otherwise
$f_{\mathrm{mmr}} = 0.25 L + 0.65 R + 0.10 Q$, where $L$ is a TF-IDF
lexical-diversity term, $R$ is the redundancy score of
Appendix~\ref{app:redundancy}, and $Q$ scores a greedy Maximal Marginal
Relevance selection \citep{carbonell1998mmr} with $\lambda = 0.7$:
\begin{equation}
\mathrm{MMR}(S_i) = \lambda\, \mathrm{sim}(S_i, Q_c)
- (1 - \lambda) \max_{S_j \in \mathcal{S}} \mathrm{sim}(S_i, S_j),
\end{equation}
over TF-IDF similarities, where $Q_c$ is a pseudo-query of the document's
top-8 TF-IDF concepts, scored by selection ratio, mean MMR value, and
selected-count bonus.

\subsection{\texttt{citation\_f1} ($w = 0.019$)}
\label{app:citation}

Citations are unique matches (capped at 20) of URL and attribution
patterns (``according to X'', ``(Source, 2024)'', ``selon X'', ``\'etude
de X''), each with a fixed prior confidence $c_j \in \{0.6, 0.7\}$ by
source type. With $M$ factual-claim matches and $C$ the citation set:
\begin{equation}
P = 100 \cdot \frac{|\{j : c_j \ge 0.55\}|}{|C|},
\qquad
R = \min\!\Big(100,\; 100 \cdot \frac{|C|}{\max(1, M/5)}\Big),
\qquad
Q = 100\,\bar{c},
\end{equation}
\begin{equation}
f_{\mathrm{cit}} = 0.50\, P + 0.30\, R + 0.20\, Q .
\end{equation}
Despite its artifact name this is a weighted arithmetic mean, not a
harmonic F1; and since all prior confidences exceed the threshold $0.55$,
$P = 100$ whenever at least one citation exists, so in the
\texttt{regex\_only} path the feature is effectively a saturating citation
count: $0$ without citations, $\ge 62$ with one. (In the full production
path, search-validated confidences can move $P$; that machinery is
disabled here and reported for completeness only.)

\subsection{\texttt{ndcg\_score} ($w = 0.015$)}
\label{app:ndcg}

Markdown sections are parsed at headings (fewer than 2 sections returns
the constant $20$; at most 10 are analyzed). The score is
$0.10\, S_{\mathrm{rel}} + 0.55\, S_{\mathrm{qual}} + 0.20\, S_{\mathrm{hier}} + 0.15\, S_{\mathrm{pos}}$:
BM25 relevance of each section to the document's top-8 TF-IDF concepts
($k_1 = 1.5$, $b = 0.75$, max-normalized), a per-section quality term
(length tier, sentence count, title informativeness, TTR), a
heading-hierarchy consistency term penalizing level jumps, and a position
term rewarding front-loaded sections. The classical
$\mathrm{DCG}/\mathrm{IDCG}$ value \citep{jarvelin2002ndcg} is computed and
stored as a reference metadata field but does not enter the score; the
artifact name is kept for data compatibility and the paper refers to the
feature as a heading-structure quality composite.

\subsection{\texttt{semantic\_redundancy} ($w = 0.015$)}
\label{app:redundancy}

On the valid blocks of Appendix~\ref{app:mmr} (fewer than 2 blocks:
constant $20$ via the shared gate), the feature scores \emph{diversity}
(high = low redundancy):
\begin{equation}
f_{\mathrm{red}} = 100\big(
0.40\,(1 - \overline{\cos})
+ 0.25\,(1 - \bar{J})
+ 0.20\,\gamma
+ 0.15\,(1 - \bar{V})\big),
\end{equation}
with $\overline{\cos}$ the mean pairwise TF-IDF cosine between blocks,
$\bar{J}$ the mean pairwise token Jaccard, $\bar{V}$ the mean pairwise
vocabulary Jaccard, and $\gamma$ a gzip compression-ratio diversity,
$\gamma = \mathrm{clamp}\big((|\mathrm{gzip}(x)| / |x| - 0.1)/0.9;\, 0, 1\big)$.

\subsection{Aggregation, defaults, and caps}
\label{app:aggregation}

Missing sub-component values default to $50$ before the transform. The
content score applies $g(u) = 100\sqrt{u / 100}$ element-wise
(Eq.~\eqref{eq:content}), blends freshness with weight $0.08$
(Eq.~\eqref{eq:blend}), and applies word-count caps to the final value: a
page under 100 words is capped at 35 points, under 200 at 50, under 300 at
65. (An earlier draft described ``listing caps''; the released scorer
implements length caps only, and a declared diversity cap is dead code.)
Upstream degenerate-input gates precede the sub-components: near-empty text
scores 5, detected gibberish 10 or 5, and function-word-free word salad 10.

The freshness component $F(x) \in [0, 100]$ of Eq.~\eqref{eq:blend},
carrying $8\%$ of the final score, is itself a deterministic weighted mean
of four bounded signals extracted by regex from the text:
\begin{equation}
F = 0.40\, F_{\mathrm{date}} + 0.20\, F_{\mathrm{temporal}}
+ 0.25\, F_{\mathrm{refs}} + 0.15\, F_{\mathrm{updates}},
\end{equation}
where $F_{\mathrm{date}}$ scores the recency of detected publication or
modification dates, $F_{\mathrm{temporal}}$ the density of
recency-indicating language, $F_{\mathrm{refs}}$ the recency of years
appearing in references and citations, and $F_{\mathrm{updates}}$ explicit
update signals; absent evidence, $F = 50$. $F_{\mathrm{refs}}$ is the term
responsible for the French cite-sources residual discussed in
Section~\ref{sec:ab}: a cite block containing recent years raises $F$.

\paragraph{Corpus provenance.} The 500 English sources are the first 500
entries of the released \path{geo_bench_sources_1000.jsonl}, built by
streaming the public GEO-Bench \texttt{test} configuration in dataset
order and keeping, per query, the first source with at least 200 words,
truncated to 400 words (one source per query for topical diversity). The
60 French sources, released as \path{geo_bench_sources_fr.jsonl}, are
French Wikipedia extracts from a fixed title list (250--450 words each)
plus a small local validation set, fetched by the harness's
\texttt{fetch-fr} command; French is a transfer \emph{check}, not a claim
of representative French web coverage.

\begin{table}[t]
\centering
\footnotesize
\begin{tabular}{lll}
\toprule
Sub-component & Degenerate condition & Value \\
\midrule
shannon\_entropy & $< 10$ tokens / undefined $H$; $V < 5$; $< 10$ words & 50; 30; 30 \\
information\_density & $< 10$ words & 6 \\
quotable\_density & $< 3$ sentences & 0 \\
entity\_density & $< 50$ words & 0 \\
semantic\_coherence & $< 10$ words & 15 \\
self\_containment & no analyzable paragraph & 20 \\
statistic\_density & $< 50$ words & 0 \\
mmr\_score & $< 2$ valid blocks & 20 \\
citation\_f1 & no citation match & 0 \\
ndcg\_score & $< 2$ heading sections & 20 \\
semantic\_redundancy & $< 2$ valid blocks & 20 \\
any & field missing from agent output & 50 \\
\bottomrule
\end{tabular}
\caption{Degenerate-input values of the eleven sub-components. These
constants explain the discrete empirical distributions of the structural
features on GEO-Bench (Appendix~\ref{app:diagnostics}): plain-text sources
without markdown headings hit the ``$< 2$ blocks/sections'' paths, which is
also the mechanism behind the parse-failure proxy leak removed in
Section~\ref{sec:ltr}.}
\label{tab:degenerate}
\end{table}

\section{Robustness and production replication: full detail}
\label{app:robfull}
\subsection*{Robustness of the selection}
\label{sec:robust}

The selection is a plateau, not a knife-edge
(\path{ab_robustness_report_ksfix.json}). (i) \emph{Split stability}:
across 10 seeded train/test re-splits with full nested refit, held-out
Pearson spans $[0.934, 0.972]$ (mean $0.951$) and the keyword-stuffing
control is negative on all 10. (ii) \emph{Weight sensitivity}: under 200
random $\pm 10\%$ perturbations of the selected weights, the worst held-out
alignment is $0.929$ and the control's worst case is $-9.16$.
(iii) \emph{Regularization}: alignment varies only from $0.927$ to $0.949$
across the entire $\lambda$ grid. (iv) \emph{Transform}: refitting under
$g(u) = 100\,(u/100)^{q}$ for $q \in [0.4, 1.0]$ leaves alignment nearly
flat ($0.942$ to $0.956$) with the stuffing control negative throughout
($-8.9$ to $-12.6$). Under the corrected control the penalty magnitude in
fact \emph{increases} toward the raw end of the sweep (the pre-fix capture
showed the opposite ordering, an artifact of the letter-noise edit, and we
correct the record here): magnitude of the negative control is therefore
\emph{not} the argument for concavity. The argument is the two gates the
raw end fails and the sqrt point passes: dose-response under repeated
proven levers (worst step $-0.74$ raw versus $-0.36$ sqrt,
Table~\ref{tab:ab}) and gaming saturation. Concavity is selected for how
it treats legitimate repeated improvement and adversarial amplification,
at essentially no alignment cost within the flat region.

\subsection*{End-to-end check in the production pipeline}

Swapping the selected weights into the full production scorer (caps and
freshness included) moves quotation from last to first among levers in
English ($\Delta = 0.68 \to 5.44$; rank correlation with the anchors
$-1.0 \to +0.5$). With the corrected stuffing edit, the full-pipeline
negative control holds at both doses under both weightings
(shipped: $-4.0$ at dose 1 and $-11.2$ at dose 8 in English, $-2.1$/$-8.2$ in
French; prior: $-5.5$/$-18.3$ and $-3.3$/$-15.5$; artifacts
\path{realpipeline_ks_*.json}). In French, the quotation response increases
$4.6\times$ but cite-sources still ranks first in the full pipeline; the
residual is consistent with the freshness component rewarding the recent
years contained in the cite block, and is reported as a limitation rather
than hidden by the offline numbers.

\section{Design rationale for the eleven sub-components}
\label{app:designfull}

\label{sec:design}

Every sub-component had to pass four admission criteria, stated here so
that the inclusion of each feature, and the exclusion of the obvious
alternatives, is checkable rather than asserted:

\begin{enumerate}
\item \emph{Deterministic and dependency-free} (Problem~\ref{prob:score}
  (iii)): closed-form regex/statistical computation, no learned model at
  scoring time. This alone excludes the strongest known quality proxies,
  LM perplexity \citep{wenzek2020ccnet} and embedding similarity: both
  would make the leaderboard score depend on a model artifact and its
  version, forfeiting the auditability claim. (The ranking study of
  Section~\ref{sec:ltr} re-admits them as \emph{optional} features
  precisely to measure what excluding them costs.)
\item \emph{Construct relevance to citability}: a documented reason, prior
  to our calibration, to believe the construct affects whether generative
  engines lift and cite a span. Table~\ref{tab:rationale} gives the
  per-feature rationale with its source.
\item \emph{A causal or robustness role}: after calibration, the feature
  must either carry an anchor (Table~\ref{tab:featanchor}), carry a gate
  (the entropy mass detects stuffing), or cover a construct the anchors
  cannot probe, in which case it is labeled as prior-inherited
  (Section~\ref{sec:ident}) rather than dressed in causal language.
\item \emph{Language portability}: computable in English and French with
  pattern swaps only (imperfectly achieved; the ASCII-tokenization and
  English-Zipf limitations are in Section~\ref{sec:threats}).
\end{enumerate}

\begin{table}[t]
\centering
\footnotesize
\setlength{\tabcolsep}{4pt}
\begin{tabular}{lp{5.2cm}p{5.6cm}}
\toprule
Sub-component & Construct (why this signal) & Why it stays (evidence class) \\
\midrule
shannon\_entropy & Well-formed informative prose keeps a flat token
distribution; entropy-rate regularity \citep{shannon1948,
genzel2002entropy} & \emph{Gate carrier}: the score's principal stuffing
detector ($-7.4$ under the negative control). Limit: saturates at 100 on
84\% of base sources (App.~\ref{app:diagnostics}) \\
information\_density & Uniform Information Density: effective text spreads
information evenly \citep{jaeger2010redundancy, meister2021uid} &
\emph{Gate carrier} ($-9.1$ under control). Limit: 70\% overlap with
shannon\_entropy, one construct under two names \\
quotable\_density & Engines lift definitional and attributed spans; the
literal realization of Quotation Addition \citep{aggarwal2024geo} &
\emph{Anchor carrier}, sharply identified: LOFO collapses alignment
$0.949 \to 0.688$, admissible $[0.12, 0.16]$. Limit: match cap $q \le 10$
penalizes long pages (App.~\ref{app:quotable}) \\
entity\_density & Named, checkable referents are what answers attribute
\citep{liu2023verifiability} & \emph{Weakly identified}: secondary
responder to three anchors, weight otherwise prior-inherited. Limit: 57\%
of sources at ceiling \\
semantic\_coherence & Local discourse coherence, entity-grid style
\citep{barzilay2008entity}; topical unity of the page &
\emph{Prior-retained}, bounded above by the profile analysis
($w \le 0.12$). Limit: JS term discriminates weakly, flow term dominated by
a constant offset (App.~\ref{app:coherence}) \\
self\_containment & A span must stand alone to survive extraction into an
answer; context-dependent openers break attribution
\citep{gao2023alce} & \emph{Prior-retained}: anchors do not probe it.
Limit: 4 distinct values on plain-text sources \\
statistic\_density & Realization of Statistics Addition
\citep{aggarwal2024geo}; numeric evidence invites citation & \emph{Anchor
+ gate carrier}: identified from both sides ($w \in [0.02, 0.05]$). Limit:
zero on 61\% of base sources \\
mmr\_score & Coverage/diversity of sections \citep{carbonell1998mmr} &
\emph{Structural candidate}: prior-retained, degenerate on this corpus,
kept for structured pages \\
citation\_f1 & Realization of Cite Sources \citep{aggarwal2024geo};
sourced claims are more attributable & \emph{Anchor carrier}, capped above
($w \le 0.02$) because its carrier gain is enormous ($+82$ points). Limit:
effectively a saturating citation count in regex-only mode
(App.~\ref{app:citation}) \\
ndcg\_score & Heading hierarchy and front-loading of content
\citep{jarvelin2002ndcg} & \emph{Structural candidate}: prior-retained,
degenerate on this corpus \\
semantic\_redundancy & Near-duplicate blocks dilute information; redundancy
penalized since \citet{carbonell1998mmr} & \emph{Structural candidate}:
prior-retained, degenerate on this corpus \\
\bottomrule
\end{tabular}
\caption{Per-feature design rationale. ``Why it stays'' classifies the
post-calibration evidence: anchor carrier and gate carrier are
empirically identified roles (Sections~\ref{sec:method}
and~\ref{sec:ident}); prior-inherited means the anchors neither require
nor forbid the feature and its weight comes from the ridge anchor $w_0$,
stated as such.}
\label{tab:rationale}
\end{table}

Candidates that fail a criterion are excluded with the failing criterion
named and a status, \emph{permanent} (the criterion is constitutive of the
artifact) or \emph{deferred to a future anchor set} (excluded on present evidence,
revisitable with better anchors): LM perplexity and embedding coherence
fail (1), permanent for the leaderboard score by construction (the ranking
study re-admits them as optional features); Flesch-style readability fails
(3) and largely duplicates the lexical terms already inside
\texttt{information\_density}, deferred; type-token ratio is already a
component of the same composite, permanent as a separate feature;
sentiment and subjectivity lexicons fail (2), no documented citability
link, permanent absent new evidence; page-authority signals (backlinks,
domain age) fail (1) and (4), requiring external data, permanent for the
text-only score; a dedicated technical-terminology density passes (1),
(2), (4) but is deferred to a future anchor set because its only current justification
would be the weakest anchor of an underdetermined system
(Section~\ref{sec:ident}). Freshness signals pass the criteria but are
kept \emph{outside} the content score in a separate blended component
(Eq.~\ref{eq:blend}) because recency is a property of the page's
maintenance, not of its prose; an observational ablation finds the blend
carries a small positive increment on citation outcomes
(Section~\ref{sec:outcome}), and its causal validation is future work.

\begin{table}[t]
\centering
\small
\begin{tabular}{llp{7.6cm}}
\toprule
Sub-component & Weight $w_i$ & Definition (sketch) \\
\midrule
shannon\_entropy & 0.246 & Token-distribution entropy~\citep{shannon1948}, normalized \\
information\_density & 0.244 & Composite density index (Shannon, lexical, semantic, weighted) \\
quotable\_density & 0.124 & Density of definitional and attributed-quote sentences (Section~\ref{sec:frfix}) \\
entity\_density & 0.094 & Named-entity-like token density (capitalization and lexicon heuristics) \\
semantic\_coherence & 0.084 & Discourse-flow composite (adjacent-sentence overlap, transition markers, entity continuity); minor title-content Jensen-Shannon term (App.~\ref{app:coherence}) \\
self\_containment & 0.073 & Stand-alone quality of paragraphs (context-dependence penalties) \\
statistic\_density & 0.051 & Density of numerals, units, percentages \\
mmr\_score & 0.036 & Block-diversity composite with an MMR-selected
quality term~\citep{carbonell1998mmr} \\
citation\_f1 & 0.019 & Outbound-citation presence and recall composite
(arithmetic, not harmonic; Appendix~\ref{app:citation}) \\
ndcg\_score & 0.015 & Heading-structure quality composite (DCG value
stored as metadata only; Appendix~\ref{app:ndcg}) \\
semantic\_redundancy & 0.015 & Inter-block diversity (inverse redundancy:
TF-IDF cosine, Jaccard, gzip ratio) \\
\bottomrule
\end{tabular}
\caption{The eleven sub-components and the released weights, with
corrected definition sketches (full formulas in
Appendix~\ref{app:features}). Weights are intervention-calibrated
(Section~\ref{sec:method}); no feature is zeroed. The largest change versus
the prior production weighting is quotable\_density, $0 \to 0.124$: the
score was previously blind to quotation, the strongest published visibility
lever ($+42.6$\%). Provenance: this vector was fitted before the
quotable-density coverage fix of Section~\ref{sec:frfix}; re-evaluated on
the corrected capture it passes all five gates with held-out anchor Pearson
$0.970$ (Section~\ref{sec:ident}), and re-fitting on the corrected capture
moves every weight by less than $0.04$ except entity\_density
($0.094 \to 0.156$), all within the admissible plateau of
Table~\ref{tab:ident}.}
\label{tab:features}
\end{table}

\section{Weight identifiability: full analysis}
\label{app:identfull}

A reviewer should ask of Table~\ref{tab:features}: why $0.246$ for entropy
and not $0.15$? Why is quotation's carrier at $0.124$ and citation's at
$0.019$ when their anchors are $+42.6\%$ and $+27.7\%$? With five anchors
and eleven weights, no per-weight optimality proof is possible, and the
underdetermination is structural rather than a sample-size accident.

\begin{proposition}[The alignment optimum is a plateau]
\label{prop:plateau}
Let $M \in \mathbb{R}^{5 \times d}$ be the response matrix of
Eq.~\eqref{eq:fit} and $r = \operatorname{rank}(M) \le 5$. If $w^\star$
maximizes $w \mapsto \cos(M w, \gamma)$ over the simplex $\Delta^{d-1}$,
then every $w \in \Delta^{d-1}$ with $w - w^\star \in \ker M$ attains the
same objective value, and $\dim \ker M \ge d - 5 = 6$. Alignment alone
therefore cannot identify the weight vector beyond (at least) a
six-dimensional set; any per-weight statement requires the gates or the
prior.
\end{proposition}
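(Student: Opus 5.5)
The proof is linear algebra in three steps; the only real care needed is in the second.

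First, the objective depends on $w$ only through $Mw$. If $w - w^\star \in \ker M$, then $Mw = Mw^\star$, so $\cos(Mw,\gamma) = \langle Mw,\gamma\rangle/(\lVert Mw\rVert\,\lVert\gamma\rVert)$ equals the optimal value. This step needs no assumption beyond $Mw^\star \neq 0$, which holds because the cosine is defined at the maximizer. Hence every such $w$ in the simplex is also a maximizer.

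Second, I compute the dimension by rank--nullity: $\dim\ker M = d - \operatorname{rank}(M) = d - r \ge d - 5 = 6$ for $d = 11$, since $M$ has only five rows. The obstacle is that the kernel of $M$ does not in general lie inside the simplex's tangent space $\{v : \mathbf{1}^\top v = 0\}$. The set of maximizers inside $\Delta^{d-1}$ is $\{w^\star + v : v \in \ker M\} \cap \Delta^{d-1}$, and its affine dimension could be as low as $\dim(\ker M \cap \mathbf{1}^\perp) \ge d - 6 = 5$, and smaller still if $w^\star$ sits on a face.

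I would handle this in two ways. The statement as written is about the direction set $\ker M$, and the bound $\dim\ker M \ge 6$ is exactly rank--nullity, so that part is literally true. For the stronger reading, I would exploit the scale invariance of the cosine. Every $w \ge 0$ with $Mw = cMw^\star$ for some $c > 0$ is also optimal. After normalizing onto the simplex, the optimal set therefore contains the simplex slice of the cone $\{w \ge 0 : Mw \in \mathbb{R}_{>0} Mw^\star\}$. This cone is the preimage of a ray, so it has dimension $\dim\ker M + 1 \ge 7$, and its intersection with the normalizing hyperplane has dimension $\ge 6$ whenever $w^\star$ is in the relative interior of the simplex. Table~\ref{tab:features} records that no released weight is zeroed, which supplies that interiority. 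At a boundary optimum the statement degrades to dimension counted within the active face, and I would state that caveat explicitly.

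Third, the concluding sentence follows directly. Any per-coordinate statement $w_i = c$ requires that $e_i$ not be orthogonal to the optimal set's direction space, which fails generically. It therefore needs extra constraints, namely the gates of Eq.~\eqref{eq:constrained} or the ridge term toward $w_0$ in Eq.~\eqref{eq:fit}. This matches the regimes reported in Table~\ref{tab:ident}.
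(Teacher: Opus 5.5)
Your first step and the rank--nullity count are exactly the paper's proof, which consists of those two lines and nothing more. So the literal content of the proposition is established: the objective is invariant along kernel directions that stay in $\Delta^{d-1}$, and $\dim\ker M\ge 6$. You go further than the paper in noticing that $(w^\star+\ker M)\cap\Delta^{d-1}$ need not be six-dimensional. Your scale-invariance count, a slice of dimension $\dim\ker M$ around a maximizer in the relative interior, is the right repair \emph{when such a maximizer exists}.

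The gap is the interiority premise, on which your third step (and hence the closing ``six-dimensional set'' sentence) rests.

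First, you cite the wrong object. Table~\ref{tab:features} lists a ridge-regularized fit ($\lambda>0$), not a maximizer of the cosine. The selected regularized fit has training cosine $0.929$, while the unregularized fit reaches $0.957$ precisely by zeroing seven of the eleven weights. Positivity of the released vector therefore says nothing about $w^\star$.

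Second, interiority cannot hold here whenever $\operatorname{rank}M=5$ and the optimal cosine is below $1$. Suppose $w^\star>0$. Scale invariance makes $w^\star$ a local maximum of $w\mapsto\cos(Mw,\gamma)$ on the open orthant, so $M^\top g=0$ for $g=\nabla_u\cos(u,\gamma)$ at $u=Mw^\star$. Injectivity of $M^\top$ then gives $g=0$, i.e.\ $Mw^\star\propto\gamma$ and cosine $1$. The boundary case you set aside as a caveat is therefore the generic one.

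Now take a maximizer's support $T$. The same first-order condition puts the columns of $M_T$ in the hyperplane $g^\perp$, so $\operatorname{rank}M_T\le 4$. Your own scale-invariance count, run inside that face, gives a plateau of dimension $\dim\ker M_T=|T|-\operatorname{rank}M_T$. For a four-feature support with independent columns, which is the shape of the paper's unregularized fit, that dimension is zero.

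So neither your argument nor the paper's supports the closing sentence as a claim about maximizers \emph{in the simplex}. What is proved is only $\dim\ker M\ge 6$. The wide admissible intervals of Table~\ref{tab:ident} come from the near-optimal region (Pearson $\ge 0.94$ plus gates), not from flatness of the exact optimum.
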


\begin{proof}
If $w - w^\star \in \ker M$ then $M w = M w^\star$, so the cosine is
unchanged; $\dim \ker M = d - r \ge d - 5$.
\end{proof}

{\sloppy What the data \emph{can}
support is a characterization of that plateau, which we give in three parts
(artifact \path{identifiability_report.json}, script
\path{geo_paper_identifiability.py}). First, \emph{necessity}:
leave-one-feature-out (LOFO) refits, fixing $w_i = 0$ and refitting the
rest. Second, \emph{admissible intervals}: profile refits fixing $w_i = c$
on the grid $c \in \{0, 0.02, 0.05, 0.08, 0.12, 0.16, 0.20, 0.25,
0.30, 0.35, 0.40, 0.50\}$ and refitting the remaining weights on the
training split; $c$ is admissible when the held-out anchor Pearson stays
at or above $0.94$ (the split-stability floor of
Section~\ref{sec:robust}) \emph{and} all five gates pass. Third, the
$\lambda \to 0$ limit of Section~\ref{sec:method}, which shows what
alignment alone would do with the freedom.\par}

% (Table ref{tab:ident} appears in the body summary.)

Table~\ref{tab:ident} and Figure~\ref{fig:profiles} show the plateau has
structure: the weights split into three regimes, and each weight's value
is justified by a different kind of evidence. The point of
Figure~\ref{fig:profiles} is visual and immediate: for a sharply
identified weight the admissible region is a narrow window around the
released value; for a weakly identified one the curve is flat and
everything is admissible, which is precisely the underdetermination that
Problem~\ref{prob:score} announced.

% (Figure ref{fig:profiles} appears in the body summary.)

\paragraph{Sharply identified carriers.} \texttt{quotable\_density} is the
only feature whose removal collapses alignment ($0.949 \to 0.688$): it is
the sole carrier of the strongest anchor (Table~\ref{tab:featanchor}), and
its admissible interval $[0.12, 0.16]$ is the narrowest of any feature.
Its released weight $0.124$ is not a tuned preference but the value the
anchor data actually pins down. \texttt{statistic\_density} is identified
from both sides: removing it costs alignment ($0.921$) \emph{and} breaks
the dose-response gate, while raising it past $\approx 0.05$ over-rewards
repeated statistics dosing; its small weight despite a $+32.8\%$ anchor is
forced by the gate, not chosen. Its released value $0.051$ sits exactly at
the upper admissible grid point (within grid resolution), and the released
vector as a whole passes all gates jointly, which is the statement that
matters: the per-coordinate intervals are slices, not a product box. \texttt{citation\_f1} is capped above at
$0.02$: its response to its own anchor is enormous ($+82$ points,
Table~\ref{tab:featanchor}), so any larger weight makes cite-sources
dominate the intervention ordering and misalign the ensemble. The
asymmetry a reviewer might flag (the $+42.6\%$ anchor gets $w = 0.124$, the
$+27.7\%$ anchor $w = 0.019$) is thus explained by carrier gain: weights
compensate for how loudly each feature responds to its lever, because what
is calibrated is the product of weight and feature response, not the weight
alone.

\paragraph{Control mass.} The entropy family carries little anchor
alignment (LOFO $0.939$ and $0.930$ with gates intact) but is the negative
control's detector: it produces the only large negative responses to
keyword stuffing (Table~\ref{tab:featanchor}), and the $\lambda \to 0$ fit
below shows what happens without it. Its exact split across the two
correlated artifact names is immaterial, and a dedicated merge refit
closes the question: constraining \texttt{shannon\_entropy} to exactly
zero and refitting under the identical protocol yields held-out Pearson
$0.944$ with 5/5 gates on both splits and French transfer $0.824$,
against $0.949$, 5/5, and $0.836$ for the reference fit; the symmetric
merge (all entropy mass on \texttt{shannon\_entropy}) scores $0.935$,
also 5/5 (\path{entropy_merge_report.json}). Carrying the entropy mass
under one artifact name instead of two costs $0.005$ alignment and
$0.012$ French transfer, with every gate margin still comfortable: the
split is a released-data convention, not a load-bearing
modeling choice. What the data supports is that a large entropy mass must
exist, with \texttt{information\_density} bounded below at $0.12$ on the
profile grid.

\paragraph{Weakly identified remainder.}{\sloppy{} Five features
(\texttt{self\_containment}, \texttt{mmr}, \texttt{ndcg},
\texttt{semantic\_redundancy}, and largely \texttt{entity\_density}) admit
$[0, 0.50]$: the anchor-plus-gates evidence neither requires nor forbids
them at any tested weight. Their released weights are inherited from the
ridge anchor $w_0$, i.e.\ from the prior production weighting, and we say
so plainly rather than dress them in causal language. They are retained
because zeroing them is not required either, because they cover constructs
the anchors do not probe (structure, redundancy), and because the ranking
study of Section~\ref{sec:ltr} independently flags their degenerate
behavior. A future anchor set with structural interventions would identify
them or zero them; the present one cannot, and Table~\ref{tab:ident}
states that limit.\par}

\paragraph{What alignment alone would do.} The $\lambda \to 0$ fit zeroes
seven of the eleven features, including \texttt{shannon\_entropy} and
\texttt{citation\_f1}, and concentrates on four
(entity $0.62$, quotable $0.19$, information\_density $0.11$,
statistic $0.08$), improving training alignment (cosine $0.957$ versus
$0.929$) but generalizing worse (held-out Pearson $0.920$) with a negative
control three times weaker ($-2.96$ versus $-9.65$ at dose 8). Alignment
alone, given eleven degrees of freedom and five targets, buys training fit
by discarding most of the mass that detects stuffing. The ridge anchor and
the gates are what prevent this, which is the constrained-optimization
framing of Eq.~\eqref{eq:constrained} in action.

\paragraph{Rejected candidate features.} Section~\ref{sec:design} lists the
excluded candidates against the admission criteria; the identifiability
analysis adds the quantitative case for two of those exclusions. A
dedicated technical-terminology density (the natural carrier for the
under-realized Technical Terms anchor, Table~\ref{tab:featanchor}) is the
obvious next feature; we exclude it here because adding a feature to chase
the weakest anchor of an already underdetermined system invites teaching to
the test, and flag it for a future anchor set. Readability formulas would
add mass to the weakly identified regime, the part of the weight vector the
data already cannot pin down. Naturalness (AI-detection) and domain trust
exist in the production system and are deliberately kept \emph{out} of the
score (Section~\ref{sec:algo}); naturalness in particular is
anti-correlated with citation in our production data and mixing it into a
citability score would conflate two constructs
\citep{jacobs2021measurement}.

\section{The query-conditioned citation predictor: full study}
\label{app:ltrfull}

This section is the measuring instrument for Proposition~\ref{prop:ceiling}:
it estimates where the content-only ceiling sits and how much of the
remaining citation signal query conditioning recovers. Citation is
relational: for a given query, an engine chooses among candidate
sources, so predicting citation is at heart a \emph{query--source ranking}
problem rather than a document-scoring problem, and the query-agnostic
score is not meant to solve it. We build within-query ranking models on
the collected citation outcomes: the same deterministic feature stack,
extended with query--source relevance features, evaluated with
cross-validation whose folds are drawn over \emph{unique queries}, so that
no query is shared between train and test in any form; the paragraph
below explains why this detail is load-bearing.

\paragraph{Features and models.} Table~\ref{tab:featcat} organizes the
feature stack. The two model-based categories are explicitly \emph{optional}:
they are open-weights, deterministic, cached, and free of marginal API cost,
and the dependency ablation below quantifies exactly what each buys. We
compare the fixed score, ridge regression, pairwise learning-to-rank,
gradient-boosted trees (GBM), and LightGBM LambdaMART
(lambdarank)~\citep{wu2010lambdamart}, reporting held-out within-query
Spearman, hit@1, and the MRR of the top-cited source. All LambdaMART
results use the harness-default configuration (200 trees, 15 leaves,
learning rate $0.03$), fixed before any evaluation; a capacity grid is
reported as a sensitivity check, not used for selection.

\paragraph{A leakage audit: fold construction is load-bearing.} The first
version of this study drew CV folds over (engine, query) \emph{records}.
The 819 ranking groups of the original seven-arm study (1{,}269 after the
July-2026 gpt-5.x extension; both cover the same 229 unique queries) span
few distinct queries, and 150 queries
appear under two or more engines, so the same query could sit in train
under one engine and in test under another; because engines' within-query
citation choices correlate, a flexible model can then memorize per-query
source outcomes instead of learning transferable relevance. The probe
that exposed the leak is the ceiling experiment itself: a LambdaMART
restricted to \emph{query-blind} content features reached $\rho = 0.48$
under record-level folds, four times the fixed score, which
Proposition~\ref{prop:ceiling} rules out as genuine generalization; the
model was fingerprinting sources it had already seen. Under query-disjoint
folds the same probe collapses to $0.065$ ($0.077$ across seeds; artifact
\path{ltr_strict_report_7eng.json}), consistent with the value in
Table~\ref{tab:leak}. Every number below uses
query-disjoint folds, and Appendix~\ref{app:leakage} tabulates the
record-fold and query-fold results side by side: the leak inflated not
just magnitudes but qualitative conclusions. Two further leaks identified
earlier remain excluded from all reported numbers: the three degenerate
structural sub-components act as a parse-failure proxy
(Appendix~\ref{app:diagnostics}), and the source's prompt position is a
presentation artifact that alone ranks at $\rho = 0.165$
(Table~\ref{tab:ltr}). One retained feature deserves disambiguation
because its name invites the same suspicion: \texttt{q\_first\_pos\_frac}
is the position of the earliest query-term match \emph{inside the source
text} (a lead-answering signal), not the source's position in the prompt.

\begin{table}[t]
\centering
\footnotesize
\setlength{\tabcolsep}{4pt}
\begin{tabular}{p{3.0cm}p{6.0cm}p{3.6cm}c}
\toprule
Category & Examples & Dependency & Regex-only \\
\midrule
Content quality (8 of 11; the degenerate structural trio is excluded,
see text) & entropy, information density, quotable density,
entity density, statistic density & none (regex/statistical) & yes \\
Lexical query relevance & BM25 over the candidate set, TF-IDF cosine,
term/entity overlap, longest n-gram, proximity, lead overlap, char-bigram
Dice & none (regex/statistical) & yes \\
Answer structure & wh-type matched to answer-type presence (year, number,
proper noun, place), definitional lead & none (regex) & yes \\
Semantic embedding (opt.) & query--source cosine, margin, within-query rank,
inter-source redundancy & \texttt{bge-m3}, open weights, no marginal API
cost & no \\
Cross-encoder (opt.) & relevance logit, margin, within-query rank &
\texttt{rerank-qa-mistral-4b}, open weights, no marginal API cost & no \\
\bottomrule
\end{tabular}
\caption{Ranker feature stack by category. ``Regex-only'' marks the
categories used in the zero-dependency ablation of Table~\ref{tab:ltr}.}
\label{tab:featcat}
\end{table}

\begin{table}[t]
\centering
\footnotesize
\begin{tabular}{lccc}
\toprule
Method (query-disjoint 5-fold CV; 229 queries, $n = 1{,}269$ groups, 10
engines) & within-$q$ $\rho$ & hit@1 & MRR \\
\midrule
\multicolumn{4}{l}{\emph{Baselines, no fitting} (\path{ltr_baselines_report.json})} \\
random & $0.00$ & $0.20$ & --- \\
source position (earlier-listed first; excluded leak) & $0.131$ & $0.33$ & --- \\
BM25 only & $0.147$ & $0.28$ & --- \\
the score (query-agnostic, fixed) & $0.119^{\dagger}$ & $0.25$ & --- \\
embedding cosine only & $0.286$ & $0.36$ & --- \\
cross-encoder logit only & $\mathbf{0.388}$ & $\mathbf{0.45}$ & --- \\
\midrule
\multicolumn{4}{l}{\emph{Fitted models} (\path{ltr_strict_report.json})} \\
ridge, content features only & $0.075$ & $0.26$ & $0.50$ \\
ridge, query features only & $0.370$ & $0.46$ & $0.66$ \\
ridge, content $+$ query & $0.377$ & $0.44$ & $0.65$ \\
GBM, content $+$ query & $0.372$ & $0.45$ & $0.65$ \\
\midrule
\multicolumn{4}{l}{\emph{LambdaMART dependency ablation}} \\
content-only features (query-blind ceiling probe) & $0.104$ & $0.24$ & $0.50$ \\
regex-only features (no model dependencies) & $0.222$ & $0.34$ & $0.56$ \\
$+$ embedding & $0.286$ & $0.33$ & $0.57$ \\
$+$ cross-encoder & $0.357$ & $0.40$ & $0.62$ \\
full & $0.368$ & $0.41$ & $0.62$ \\
\bottomrule
\end{tabular}
\caption{Baselines, fitted models, and dependency ablation under the
query-disjoint protocol, over all ten engine arms (the seven original
plus the gpt-5.x replication). Five-seed fold reshuffles give
$0.348 \pm 0.013$ (full), $0.199 \pm 0.012$ (regex-only), and
$0.088 \pm 0.029$ (content-only). The full LambdaMART improves on the
fixed score by $+0.248$ per query (paired $t = 13.2$,
$p < 10^{-36}$, Wilcoxon $p < 10^{-34}$, $n = 1{,}213$). The no-fit
baselines involve no training and are unaffected by fold design; chance
hit@1 is $0.20$. The query-blind content-only probe does not beat the
fixed score, and the strongest single signal, the cross-encoder logit,
is not reliably beaten by any fitted model at this number of unique
queries.
$^{\dagger}$The fixed-score row shows the LTR harness value ($0.1188$,
\texttt{ltr\_strict\_report.json}); the baseline script's group filtering
yields $0.122$, a filtering-level difference reported for transparency.}
\label{tab:ltr}
\end{table}

\paragraph{Result 1: no content-only model beats the fixed score.}
The query-blind probe, a LambdaMART free to combine the eight
non-degenerate sub-components nonlinearly, reaches $\rho = 0.104$
($0.088 \pm 0.029$ across seeds, the largest seed spread of any model)
with hit@1 $0.24$, against
$0.119$ and hit@1 $0.25$ for the fixed score, and ridge on the same
features reaches $0.075$. On unseen queries, no content-only model we can
fit beats the fixed, auditable score: it is not leaving recoverable
content-only signal on the table, and its modest outcome correlation
(Section~\ref{sec:outcome}) is best read as a lower bound sitting close
to the achievable ceiling of Proposition~\ref{prop:ceiling}, not as an
engineering shortfall.

\paragraph{Result 2: query conditioning recovers most of the rest, and
model-based relevance carries more of it than a leaky evaluation
suggests.} Conditioning on the query roughly triples the measured
signal, from $0.119$ to about $0.39$ (Figure~\ref{fig:signal}). The
strongest single signal is the open-weights cross-encoder logit at
$\rho = 0.388$, hit@1 $0.45$, with \emph{no fitting at all}; the fitted
models cluster at $0.36$ to $0.38$ and do not reliably beat it with 229
unique queries of training signal (full LambdaMART: $0.348 \pm 0.013$
across fold reshuffles). The zero-dependency regex-only ranker reaches
$0.222$ ($0.199 \pm 0.012$), hit@1 $0.34$: it roughly doubles the
content-only score and beats the position and BM25 baselines, but
recovers under two fifths of the measured gap
($0.10$ of the $0.119 \to 0.388$ span); the rest needs learned
representations. This corrects a
conclusion the record-fold evaluation would have licensed: under the
leaky folds, regex-only appeared to reach $0.53$ against $0.57$ for the
full stack, and we would have reported that regex features do most of
the work; under query-disjoint folds that attribution inverts
(Appendix~\ref{app:leakage}). Fold leakage manufactures qualitative
conclusions, not just inflated magnitudes.

% (Figure ref{fig:signal} appears in the body vignette.)

\paragraph{Robustness of the fit.} The LambdaMART configuration is the
harness default, fixed before evaluation; across the capacity grid
($100 \times 7$ to $400 \times 31$ trees $\times$ leaves) the strict-fold
value varies from $0.348$ to $0.368$, so configuration search could move
the headline by at most $0.02$, and the seed spread is $\pm 0.013$.
Feature importance of the regex-only model (gain shares, full-data fit;
Appendix~\ref{app:leakage}) confirms the mechanism: lexical query--source
overlap dominates (BM25 $10.5\%$, query-term frequency and Jaccard
$21\%$ combined), with source length ($10.6\%$) and two content features
(semantic\_coherence, information\_density) contributing secondary mass;
no single feature exceeds $11\%$.

\begin{figure}[t]
\centering
\includegraphics[width=0.95\linewidth]{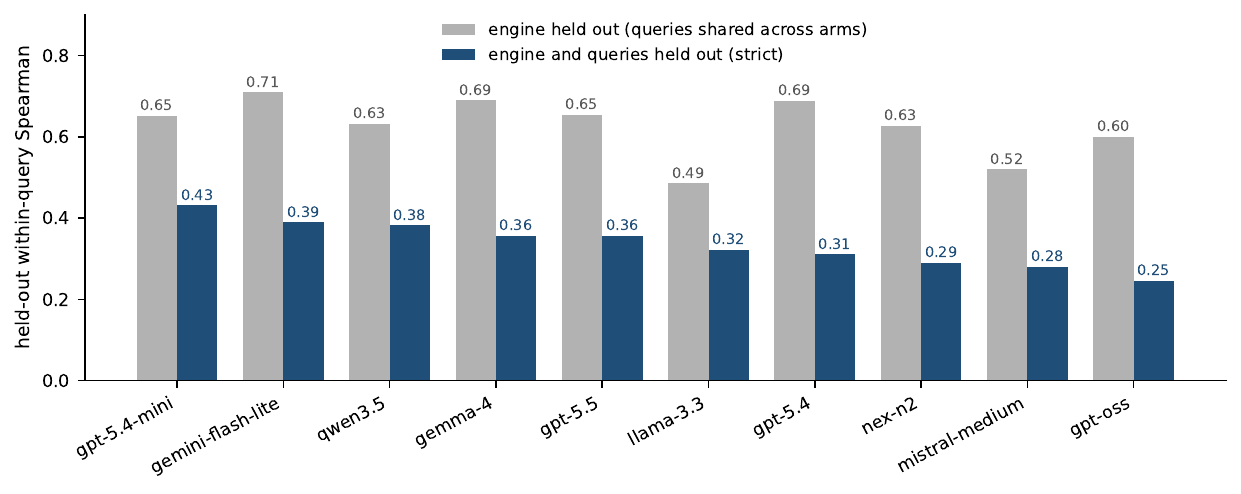}
\caption{Leave-one-engine-out transfer of the full LambdaMART, in two
regimes. Grey: only the engine is held out, so the training arms contain
the same queries under other engines and per-query outcome information
leaks across arms. Blue: the engine \emph{and} its evaluation queries are
held out (strict). The strict values are the defensible transfer claim;
the gap between the bars measures how much cross-engine query sharing
inflates the naive protocol.}
\label{fig:loeo}
\end{figure}

\paragraph{Generalization to unseen engines.} The transfer test is
leave-one-engine-out, and the fold lesson above applies here too, so we
report two regimes (Figure~\ref{fig:loeo}). Holding out only the engine
(training arms keep the same queries under other engines) gives a mean
held-out Spearman of $0.626$ over the ten arms; but since engines'
citation choices
correlate, this measures transfer of per-query outcome knowledge as much
as transfer of relevance patterns. The strict regime holds out the engine
\emph{and} its evaluation queries: the mean drops to $0.337$, ranging from
$0.25$ (gpt-oss) to $0.43$ (gpt-5.4-mini), with the four closed-weights
commercial arms all within that band (gemini-flash-lite $0.391$;
gpt-5.5 $0.357$, gpt-5.4 $0.311$, gpt-5.4-mini $0.432$, each trained
without any gpt-5.x arm's queries). The defensible claim is the strict
one: a ranker fit on the remaining engines, on disjoint queries, still
predicts a held-out commercial
engine's citations at two to three times the fixed score's per-engine
correlation, but far below what the naive protocol suggests. Both regimes
are released per engine (\path{ltr_strict_report.json}).

\paragraph{Which queries are predictable? (exploratory)} Grouping strict
held-out results by GEO-Bench query tags (membership, since tag positions
are inconsistent in the corpus) is unstable across dataset extensions,
which is itself the finding. On the original seven arms the data suggested
an archival-versus-volatile gradient (law and government $0.52$, health
$0.17$); under the ten-arm extension that gradient flattens (law and
government $0.35 \pm 0.04$, health $0.35 \pm 0.05$), and the
fact-versus-opinion contrast that looked significant under the leaky folds
($0.60$ versus $0.45$) remains noise ($0.36 \pm 0.02$ versus
$0.30 \pm 0.04$). The only tags that stand out at ten arms are
command-type queries, at chance ($-0.00 \pm 0.05$, $n = 76$), and
prediction, nominally lower ($0.26 \pm 0.06$). We report the tag analysis
as descriptive only; the full per-tag table is released
(\path{ltr_strict_report.json}).

\paragraph{Deployment implication.} The two artifacts serve different jobs
and should not be substituted for one another. The query-agnostic score is
the right tool where no query exists or none should: site audits, public
leaderboards, editorial quality control, regression testing of content
pipelines; its claims are the gate-enforced response surface, alignment
with historical causal evidence, and gaming resistance, not
per-query prediction, and Result~1 shows no content-only model does that
job better. The query-conditioned predictor is the right tool when
the question is ``for this query, which of my pages will the engine
cite?'': visibility forecasting, content targeting, and prioritizing pages
to improve for a given query portfolio; it needs a query at inference, and
under the strict protocol its most dependable form is the simplest, a
single open-weights cross-encoder over the candidates. Using the offline
score to promise per-query visibility overclaims
(Section~\ref{sec:outcome} quantifies by how much); using the ranker
where no query is available is undefined.

\section{The ranking-evaluation leakage audit}
\label{app:leakage}

The original ranking evaluation drew CV folds over (engine, query)
records; the corrected protocol draws them over unique queries
(Section~\ref{sec:ltr}). Table~\ref{tab:leak} tabulates both regimes for
every fitted model, from the released artifacts
(\path{ltr_leak_probe.json} preserves the record-fold values;
\path{ltr_strict_report_7eng.json} carries the corrected ones). The
audit ran on the original seven-arm dataset, where the leak was
discovered; the main text reports the ten-arm extension
(\path{ltr_strict_report.json}), whose strict values are within noise
of the corrected column here. Three
regularities are worth recording. First, inflation grows with model
flexibility: ridge moves little ($0.401 \to 0.379$), trees a lot, and the
query-blind probe most of all ($0.485 \to 0.065$), consistent with
memorization of per-query source outcomes rather than generic optimism.
Second, the leak reverses a qualitative conclusion: under record folds
the regex-only variant sits within $0.04$ of the full model and appears
to make model-based features dispensable; under query folds the gap is
$0.16$ and the conclusion inverts. Third, the no-fit baselines are
untouched by construction, which is what makes the cross-encoder-alone
row a fixed point for comparing the two regimes.

\begin{table}[t]
\centering
\footnotesize
\begin{tabular}{lcc}
\toprule
Model & record folds (leaky) $\rho$ & query-disjoint folds $\rho$ \\
\midrule
LambdaMART, content-only (query-blind) & $0.485$ & $0.065$ \\
LambdaMART, regex-only & $0.533$ & $0.202$ \\
LambdaMART, $+$ embedding & $0.553$ & $0.269$ \\
LambdaMART, $+$ cross-encoder & $0.571$ & $0.316$ \\
LambdaMART, full & $0.572$ & $0.363$ \\
GBM, content $+$ query & $0.447$ & $0.364$ \\
ridge, content $+$ query & $0.401$ & $0.379$ \\
cross-encoder logit alone (no fitting) & $0.393$ & $0.393$ \\
fixed score (no fitting) & $0.114$ & $0.114$ \\
\midrule
LOEO mean (engine-only vs.\ engine$+$query holdout) & $0.600$ & $0.318$ \\
\bottomrule
\end{tabular}
\caption{Every fitted model under both fold constructions, on the
original seven-arm dataset where the audit was performed. The
record-fold column is reported only as the comparison arm of this audit;
no claim in the paper rests on it.}
\label{tab:leak}
\end{table}

\begin{table}[t]
\centering
\footnotesize
\begin{tabular}{lc@{\hspace{2.2em}}lc}
\toprule
Feature & gain share & Feature & gain share \\
\midrule
q\_bm25 & $10.1\%$ & q\_tfidf\_cos & $5.5\%$ \\
c\_semantic\_coherence & $9.7\%$ & q\_lead\_overlap & $5.1\%$ \\
src\_len & $8.8\%$ & q\_phrase & $4.9\%$ \\
q\_tf\_sum & $8.1\%$ & q\_coverage & $4.5\%$ \\
q\_tf\_max & $7.4\%$ & q\_char\_dice & $4.1\%$ \\
q\_jaccard & $6.4\%$ & c\_information\_density & $6.3\%$ \\
\bottomrule
\end{tabular}
\caption{Feature importance (LightGBM gain shares, full-data fit,
interpretability only) of the regex-only ranker: the twelve largest of 25
features (\texttt{ltr\_strict\_report.json}). Lexical query--source
overlap dominates; two content features carry secondary mass; no feature
exceeds $10\%$.}
\label{tab:importance}
\end{table}

\section{Empirical feature diagnostics on GEO-Bench}
\label{app:diagnostics}

Table~\ref{tab:dist} summarizes the distribution of every sub-component on
the 500 unmodified GEO-Bench sources, and Table~\ref{tab:corr} their
Spearman correlations (script \path{geo_paper_feature_stats.py}). Three
facts documented here are load-bearing in the main text. (i)
\emph{Ceiling saturation}: shannon\_entropy is exactly 100 on 84\% of
sources and entity\_density on 57\%; base-corpus variance is not where
these features discriminate. (ii) \emph{Structural degeneracy}: mmr, ndcg,
and semantic\_redundancy take essentially two values (their degenerate
constant 20, or near 0 on parse failures), because plain-text GEO-Bench
passages rarely contain two valid markdown blocks; their pairwise
correlation of $1.0$ and their role as a parse-failure proxy follow
mechanically, which is why they are excluded from the ranking features of
Section~\ref{sec:ltr}. (iii) \emph{Redundancy of the entropy family}:
shannon\_entropy and information\_density correlate at $0.94$ (Pearson on
raw values; $0.50$ Spearman, reflecting the ceiling ties).

\begin{table}[t]
\centering
\footnotesize
\setlength{\tabcolsep}{4.5pt}
\begin{tabular}{lrrrrrrrr}
\toprule
Sub-component & mean & sd & p5 & p50 & p95 & \%$=100$ & \%$=0$ & \#unique \\
\midrule
shannon\_entropy & 99.3 & 6.4 & 97.9 & 100.0 & 100.0 & 84.0 & 0.4 & 36 \\
information\_density & 87.3 & 6.0 & 83.5 & 87.9 & 90.9 & 0.0 & 0.4 & 376 \\
quotable\_density & 48.9 & 43.3 & 0.0 & 55.6 & 100.0 & 28.6 & 40.4 & 24 \\
entity\_density & 86.6 & 21.2 & 42.6 & 100.0 & 100.0 & 57.4 & 0.8 & 164 \\
semantic\_coherence & 52.0 & 4.3 & 48.8 & 52.0 & 56.7 & 0.0 & 0.4 & 364 \\
self\_containment & 70.1 & 16.7 & 65.0 & 65.0 & 100.0 & 19.0 & 0.4 & 4 \\
statistic\_density & 17.4 & 29.5 & 0.0 & 0.0 & 100.0 & 6.6 & 60.6 & 84 \\
mmr\_score & 19.9 & 1.3 & 20.0 & 20.0 & 20.0 & 0.0 & 0.4 & 2 \\
citation\_f1 & 10.6 & 29.6 & 0.0 & 0.0 & 94.0 & 0.0 & 88.6 & 7 \\
ndcg\_score & 19.9 & 1.3 & 20.0 & 20.0 & 20.0 & 0.0 & 0.4 & 2 \\
semantic\_redundancy & 19.9 & 1.3 & 20.0 & 20.0 & 20.0 & 0.0 & 0.4 & 2 \\
\bottomrule
\end{tabular}
\caption{Distribution of the raw sub-components on the 500 unmodified
GEO-Bench sources (base variant of the released capture). The discrete
distributions of self\_containment, mmr, ndcg, and semantic\_redundancy
follow from the degenerate constants of Table~\ref{tab:degenerate} on
plain-text sources.}
\label{tab:dist}
\end{table}

\begin{table}[t]
\centering
\footnotesize
\setlength{\tabcolsep}{3.2pt}
\begin{tabular}{lrrrrrrrrrrr}
\toprule
 & 1 & 2 & 3 & 4 & 5 & 6 & 7 & 8 & 9 & 10 & 11 \\
\midrule
1 shannon\_entropy & 1.00 & 0.50 & 0.14 & $-$0.02 & $-$0.16 & 0.06 & 0.00 & 0.17 & 0.01 & 0.17 & 0.17 \\
2 information\_density & & 1.00 & 0.04 & $-$0.07 & $-$0.20 & 0.02 & $-$0.00 & 0.11 & $-$0.01 & 0.11 & 0.11 \\
3 quotable\_density & & & 1.00 & $-$0.07 & 0.12 & 0.01 & $-$0.02 & 0.07 & 0.04 & 0.07 & 0.07 \\
4 entity\_density & & & & 1.00 & $-$0.12 & $-$0.03 & 0.26 & 0.12 & 0.07 & 0.12 & 0.12 \\
5 semantic\_coherence & & & & & 1.00 & 0.02 & $-$0.06 & 0.11 & $-$0.08 & 0.11 & 0.11 \\
6 self\_containment & & & & & & 1.00 & 0.05 & 0.15 & 0.04 & 0.15 & 0.15 \\
7 statistic\_density & & & & & & & 1.00 & 0.05 & $-$0.06 & 0.05 & 0.05 \\
8 mmr\_score & & & & & & & & 1.00 & 0.02 & 1.00 & 1.00 \\
9 citation\_f1 & & & & & & & & & 1.00 & 0.02 & 0.02 \\
10 ndcg\_score & & & & & & & & & & 1.00 & 1.00 \\
11 semantic\_redundancy & & & & & & & & & & & 1.00 \\
\bottomrule
\end{tabular}
\caption{Spearman correlations between raw sub-components on the base
variant ($n = 500$). The structural trio (8, 10, 11) is perfectly mutually
correlated (two-valued); the entropy pair (1, 2) correlates at $0.50$
Spearman and $0.94$ Pearson. The anchor carriers (3, 7, 9) are nearly
orthogonal to everything, consistent with their sharp identification in
Table~\ref{tab:ident}.}
\label{tab:corr}
\end{table}

\end{document}